\documentclass[conference]{IEEEtran}

\usepackage{graphicx} 
\usepackage{subfigure} 


\usepackage{algorithm}
\usepackage{algorithmic}



\usepackage[cmex10]{amsmath}
\usepackage{array}
\usepackage{amssymb, amsmath, amsfonts, latexsym, graphicx, tabularx, tabulary, upgreek, amsthm, dsfont}

\usepackage{xr}

\newtheorem{thm}{Theorem}
\newtheorem{lem}{Lemma}
\newtheorem{cor}{Corollary}
\theoremstyle{definition}

\newcommand{\E}{\mathbb{E}}

\newcommand{\argmin}{\operatornamewithlimits{argmin}}

\newcommand{\N}{\mathbb{N}}
\newcommand{\1}{\mathds{1}}
\newcommand{\logg}{\log}
\newcommand{\X}{X}


\begin{document}

\title{The Famine of Forte: \\Few Search Problems Greatly Favor Your Algorithm}

\author{\IEEEauthorblockN{George D. Monta\~nez}
\IEEEauthorblockA{Machine Learning Department\\
Carnegie Mellon University\\
Pittsburgh, Pennsylvania, USA\\
\texttt{gmontane@cs.cmu.edu}}
}

\maketitle

\begin{abstract}
Casting machine learning as a type of search, we demonstrate that the proportion of problems that are favorable for a fixed algorithm is strictly bounded, such that no single algorithm can perform well over a large fraction of them. Our results explain why we must either continue to develop new learning methods year after year or move towards highly parameterized models that are both flexible and sensitive to their hyperparameters. We further give an upper bound on the expected performance for a search algorithm as a function of the mutual information between the target and the information resource (e.g., training dataset), proving the importance of certain types of dependence for machine learning. Lastly, we show that the expected per-query probability of success for an algorithm is mathematically equivalent to a single-query probability of success under a distribution (called a \emph{search strategy}), and prove that the proportion of favorable strategies is also strictly bounded. Thus, whether one holds fixed the search algorithm and considers all possible problems or one fixes the search problem and looks at all possible search strategies, favorable matches are exceedingly rare. The forte (strength) of any algorithm is quantifiably restricted.
\end{abstract}

\section{Introduction}

\textbf{BIGCITY, United States.} In a fictional world not unlike our own, a sophisticated criminal organization plots to attack an unspecified landmark within the city. Due to the complexity of the attack and methods of infiltration, the group is forced to construct a plan relying on the coordinated actions of several interdependent agents, of which the failure of any one would cause the collapse of the entire plot. As a member of the city's security team, you must allocate finite resources to protect the many important locations within the city. Although you know the attack is imminent, your sources have not indicated which location, of the hundreds possible, will be hit; your lack of manpower forces you to make assumptions about target likelihoods. You know you can foil the plot if you stop even one enemy agent. Because of this, you seek to maximize the odds of capturing an agent by placing vigilant security forces at the strategic locations throughout the city. Allocating more security to a given location increases surveillance there, raising the probability a conspirator will be found if operating nearby. Unsure of your decisions, you allocate based on your best information, but continue to second-guess yourself.

With this picture in mind, we can analyze the scenario through the lens of algorithmic search. Algorithmic search methods, whether employed in fictional security situations or by researchers in the lab, all share common elements. They have a \textbf{search space} of elements (possible locations) which contains items of high value at unknown locations within that space. These high-value elements are \textbf{targets} of the search, which are sought by a process of inspecting elements of the search space to see if they contain any elements of the target set. (We refer to the inspection of search space elements as \textbf{sampling} from the search space, and each sampling action as a \textbf{query}.) In our fictional scenario, the enemy activity locations constitute the target set, and surveillance at city locations corresponds to a search process (i.e., attempting to locate and arrest enemy agents). The search process can be deterministic or contain some degree of randomization, such as choosing which elements to inspect using a weighted distribution over the search space. The search process is a \textbf{success} if an element of the target set is located during the course of inspecting elements of the space. The \textbf{history} of the search refers to the collection of elements already sampled and any accompanying information (e.g., the intelligence data gathered thus far by the security forces). 

There are many ways to search such a space. Research on algorithmic search is replete with proposed search methods that attempt to increase the expected likelihood of search success, across many different problem domains. However, a direct result of the No Free Lunch theorems are that no search method is universally optimal across all search problems~\cite{WOLPERTMACREADY,culberson1998futility,english1996evaluation,CUP,whitley2000functions,NO-MORE-LUNCH,droste2002optimization,dembski2010search,GENERAL-THEORY-OF-SEARCH}, since all algorithmic search methods have performance equivalent to uniform random sampling on the search space when uniformly averaged across any closed-under-permutation set of problem functions~\cite{CUP}. Thus, search (and learning) algorithms must trade weaker performance on some problems for improved performance on others~\cite{WOLPERTMACREADY,SCHAFFER1994}. Given the fact that there exists no universally superior search method, we typically seek to develop methods that perform well over a large range of important problems. An important question naturally arises: 
\begin{quote}
\emph{Is there a limit to the number of problems for which a search algorithm can perform well?}
\end{quote}
For search problems in general, the answer is \emph{yes}, as the proportion of search problems for which an algorithm outperforms uniform random sampling is strictly bounded in relation to the degree of performance improvement. Previous results have relied on the fact that most search problems have many target elements, and thus are not difficult for random sampling~\cite{MONTANEZ-TARGET,ENGLISH-LEARNING-HARD}. Since the typical search problem is not difficult for random sampling, it becomes hard to find a problem for which an algorithm can significantly and reliably outperform random sampling. While theoretically interesting, a more relevant situation occurs when we consider difficult search problems, which are those having relatively small target sets. We denote such problems as \textbf{target-sparse problems} or following~\cite{MONTANEZ-TARGET}, \textbf{target-sparse functions}. Note, this use of sparseness differs from that typical in the machine learning literature, since it refers to sparseness of the target space, not sparseness in the feature space. One focus of this paper is to prove results that hold even for target-sparse functions. As mentioned, Monta\~nez~\cite{MONTANEZ-TARGET} and English~\cite{ENGLISH-LEARNING-HARD} have shown that uniform sampling does well on the majority of functions since they are not target-sparse, thus making problems for which an algorithm greatly outperforms random chance rare. If we restrict ourselves to \emph{only} target-sparse functions, perhaps it would become easier to find relatively favorable problems since we would already know uniform sampling does poorly. The bar would already be low, so we would have to do less to surpass it, perhaps leading to a greater proportion of favorable problems within that set. We show why this intuition incorrect, along with proving several other key results.

\section{Contributions}

First, we demonstrate that favorable search problems must necessarily be rare. Our work departs from No Free Lunch results (namely, that the mean performance across sets of problems is fixed for all algorithms) to show that the proportion of favorable problems is strictly bounded in relation to the inherent problem difficulty and the degree of improvement sought (i.e., not just the mean performance is bounded). Our results continue to hold for sets of objective functions that are \emph{not} closed-under-permutation, in contrast to traditional No Free Lunch theorems. Furthermore, the negative results presented here do not depend on any distributional assumptions on the space of possible problems, such that the proportion of favorable problems is small regardless of which distribution holds over them in the real world. This directly answers critiques aimed at No Free Lunch results arguing against a uniform distribution on problems in the real world (cf.~\cite{rao1995every}), since given \emph{any} distribution over possible problems, there are still relatively few favorable problems within the set one is taking the distribution over.

As a corollary, we prove the information costs of finding any favorable search problem is bounded below by the number of bits ``advantage'' gained by the algorithm on such problems. We do this by using an active information transform to measure performance improvement in bits~\cite{COST-OF-SUCCESS}, proving a conservation of information result~\cite{NO-MORE-LUNCH,COST-OF-SUCCESS,MONTANEZ-TARGET} that shows the amount of information required to locate a search problem giving $b$ bits of expected information is at least $b$ bits. Thus, to get an expected net gain of information, the true distribution over search problems must be biased towards favorable problems for a given algorithm. This places a floor on the minimal information costs for finding favorable problems, somewhat reminiscent of the entertainingly satirical work on ``data set selection''~\cite{laloudouana2003data}.

Another major contribution of this paper is to bound the expected per-query probability of success based on information resources. Namely, we relate the degree of dependence (measured in mutual information) between target sets and external information resources, such as objective functions, noisy measurements\footnote{Noisy in the sense of inaccurate or biased, not in the sense of indeterministic.} or sets of training data, to the maximum improvement in search performance. We prove that for a fixed target-sparseness and given an algorithm with induced single-query probability of success $q$,
\begin{align}
    q \leq \frac{I(T; F) + D(P_T\|\mathcal{U}_T) + 1}{I_{\Omega}}
\end{align}
where $I(T; F)$ is the mutual information between target set $T$ (as a random variable) and external information resource $F$, $D(P_T\|\mathcal{U}_T)$ is the Kullback-Leibler divergence between the marginal distribution on $T$ and the uniform distribution on target sets, and $I_{\Omega}$ is the baseline information cost for the search problem due to sparseness. This simple equation takes into account degree of dependence, target sparseness, target function uncertainty, and the contribution of random luck. It is surprising that such well-known quantities appear in the course of simply trying to upper-bound the probability of success.

We then establish the equivalence between the expected per-query probability of success for an algorithm and the probability of a successful single-query search under some distribution, which we call a \emph{strategy}. Each algorithm maps to a strategy, and we prove an upper-bound on the proportion of favorable strategies for a fixed problem. Thus, matching a search problem to a fixed algorithm or a search algorithm to a fixed problem are both provably difficult tasks, and the set of favorable items remains vanishingly small in both cases.

Lastly, we apply the results to several problem domains, some toy and some actual, showing how these results lead to new insights in different research areas. 

\section{Search Framework}

We begin by formalizing our problem setting, search method abstraction and other necessary concepts.

\subsection{The Search Problem}\label{SEARCH-PROBLEM}
\begin{figure}
\centering
\includegraphics[width=\linewidth]{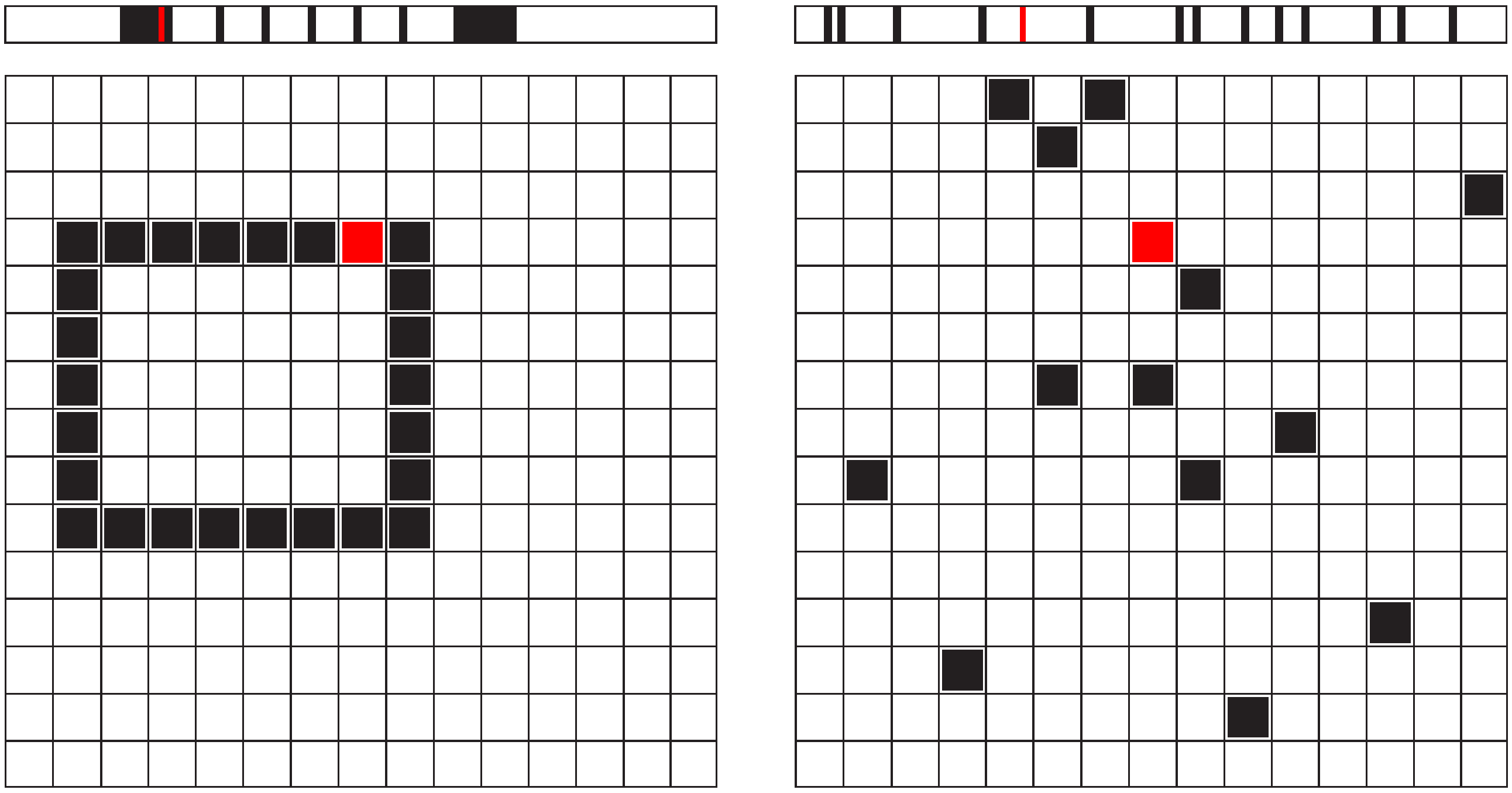}
\caption{Two example target sets. In one case (left), knowing the location of the black target elements fully determines the location of the red target element. In the second case (right), the locations of the black target elements reveal nothing concerning the location of the red element. Problems are represented as a binary vector (top row), and two-dimensional search space (bottom row).}
\label{fig:TWO-PROBLEMS}
\end{figure}

We abstract the search problem as follows. The search space, denoted $\Omega$, contains the elements to be examined. We limit ourselves to finite, discrete search spaces, which entails no loss of generality when considering search spaces fully representable on physical computer hardware within a finite time. Let the \textbf{target set} $T \subset \Omega$ be a nonempty subset of the search space. The set $T$ can be represented using a binary vector the size of $\Omega$, where an indexed position evaluates to 1 whenever the corresponding element is in $T$ and 0 otherwise. Thus, each $T$ corresponds to exactly one binary vector of length $|\Omega|$, and vice versa. We refer to this one-to-one mapped binary vector as a \textbf{target function} and use the terms target function and target set interchangeably, depending on context. These target sets/functions will help us define our space of possible search problems, as we will see shortly.

Figure~\ref{fig:TWO-PROBLEMS} shows two example target sets, in binary vector and generic target set form. The set on the left has strong (potentially exploitable) structure governing the placement of target elements, whereas the example on the right is more or less random. Thus, knowing the location of some target elements may or may not be able to help one find additional elements, and in general there may be any degree of correlation between the location of target elements already uncovered and those yet to be discovered. 

Typically, elements from the search space $\Omega$ are evaluated according to some \textbf{external information resource}, such as an objective function $f$. We abstract this resource as simply a finite length bit string, which could represent an objective function, a set of training data (in supervised learning settings), or anything else. The resource can exist in coded form, and we make no assumption about the shape of this resource or its encoding. Our only requirement is that it can be used as an oracle, given an element (possibly the null element) from $\Omega$. In other words, we require the existence of two methods, one for initialization (given a null element) and one for evaluating queried points in $\Omega$. We assume both methods are fully specified by the problem domain and external information resource. With a slight abuse of notation, we define an \textbf{information resource evaluation} as $F(\omega) := g(F,\omega)$, where $g$ is an extraction function applied to the information resource and $\omega \in \Omega \cup \{\emptyset\}$. Therefore, $F(\emptyset)$ represents the method used to extract initial information for the search algorithm (absent of any query), and $F(\omega)$ represents the evaluation of point $\omega$ under resource $F$. The size of the information resource becomes important for datasets in machine learning, which determines the maximum amount of mutual information available between $T$ and $F$. 

A \textbf{search problem} is defined as a 3-tuple, $(\Omega, T, F)$, consisting of a search space, a target subset of the space and an external information resource $F$, respectively. Since the target locations are hidden, any information gained by the search concerning the target locations is mediated through the external information resource $F$ alone. Thus, the space of possible search problems includes many deceptive search problems, where the external resource provides misleading information about target locations, similar to when security forces are given false intelligence data, and many noisy problems, similar to when imprecise intelligence gathering techniques are used. In the fully general case, there can be any relationship between $T$ and $F$. Because we consider any and all degrees of dependence between external information resources and target locations, this effectively creates independence when considering the set as a whole, allowing the first main result of this paper to follow as a consequence.

However, in many natural settings, target locations and external information resources are tightly coupled. For example, we typically threshold objective function values to designate the target elements as those that meet or exceed some minimum. Doing so enforces dependence between target locations and objective functions, where the former is fully determined by the latter, once the threshold is known. This dependence causes direct correlation between the objective function (which is observable) and the target locations (which are not directly observable). We will demonstrate such correlation is exploitable, affecting the upper bound on the expected probability of success.

\subsection{The Search Algorithm}
\begin{figure}
\centering
\includegraphics[width=\linewidth]{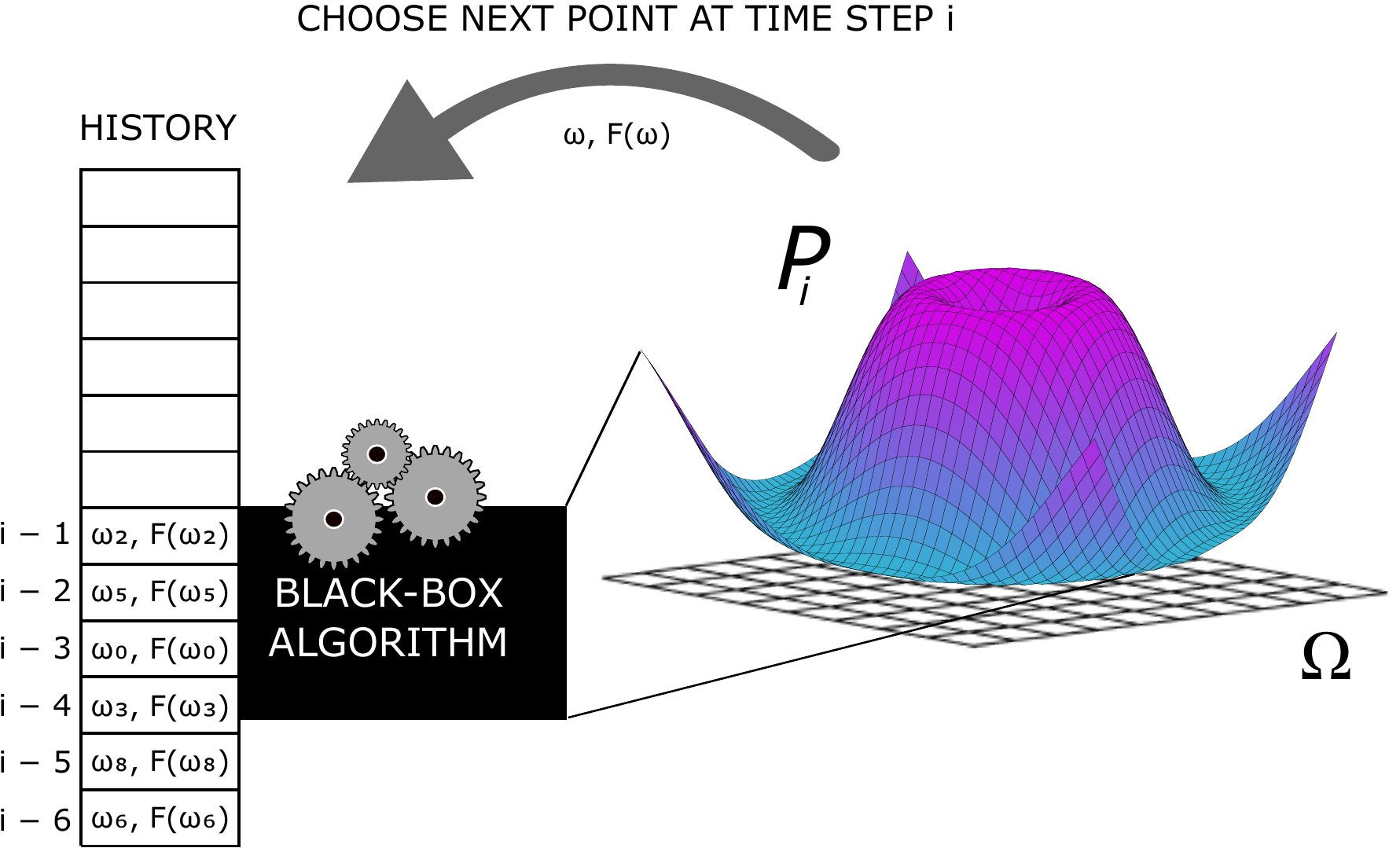}
\caption{Black-box search algorithm. At time $i$ the algorithm computes a probability distribution $P_i$ over the search space $\Omega$, using information from the history, and a new point is drawn according to $P_i$. The point is evaluated using external information resource $F$. The tuple $( \omega, F(\omega))$ is then added to the history at position $i$. Note, indices on $\omega$ elements do not correspond to time step in this diagram, but to sampled locations.}
\label{BLACKBOX-SEARCH}
\end{figure}

An algorithmic search is any process that chooses elements of a search space to examine, given some history of elements already examined and information resource evaluations. The history consists of two parts: a query trace and a resource evaluation trace. A \textbf{query trace} is a record of points queried, indexed by time. A \textbf{resource evaluation trace} is a record of partial information extracted from $F$, also indexed by time. The information resource evaluations can be used to build elaborate predictive models (as is the case of Bayesian optimization methods), or ignored completely (as is done with uniform random sampling). The algorithm's internal state is updated at each time step (according to the rules of the algorithm), as new query points are evaluated against the external information resource. The search is thus an iterative process, with the history $h$ at time $i$ represented as $h_{i} = (\omega_i, F(\omega_i))$. The problem domain defines how much initial information from $F$ is given by $F(\emptyset)$, as well as how much (and what) information is extracted from $F$ at each query.

We allow for both deterministic and randomized algorithms, since deterministic algorithms are equivalent to randomized algorithms with degenerate probability functions (i.e., they place all probability mass on a single point). Furthermore, any population based method can also be represented in this framework, by holding $P$ fixed when selecting the $m$ elements in the population, then considering the history (possibly limiting the horizon of the history to only $m$ steps, creating a Markovian dependence on the previous population) and updating the probability distribution over the search space for the next $m$ steps.

Abstracting away the details of how such a distribution $P$ is chosen, we can treat the search algorithm as a black box that somehow chooses elements of a search space to evaluate. A search is successful if an element in the target set is located during the search. Algorithm~\ref{alg:algorithmic_search} outlines the steps followed by the black-box search algorithm and Figure~\ref{BLACKBOX-SEARCH} visually demonstrates the process.
\begin{algorithm}
\caption{Black-box Search Algorithm}
\small
\begin{algorithmic}[1]
    \STATE Initialize $h_{0} \leftarrow (\emptyset, F(\emptyset))$.
\FORALL{$i = 1,\ldots,i_{\text{\tiny max}}$}
\STATE Using history $h_{0:i-1}$, compute $P_i$, the distribution over $\Omega$.
\STATE Sample element $\omega_i$ according to $P_i$.
\STATE Set $h_{i} \leftarrow (\omega_i, F(\omega_i))$.
\ENDFOR
\IF{an element of $T$ is contained in any tuple of $h$}
    \STATE Return \emph{success}.
\ELSE
    \STATE Return \emph{failure}.
\ENDIF
\end{algorithmic}
\label{alg:algorithmic_search}
\end{algorithm}

Our framework is sufficiently general to apply to supervised learning, genetic algorithms, and sequential model-based hyperparameter optimization, yet is specific enough to allow for precise quantitative statements. Although faced with an inherent balancing act whenever formulating a new framework, we emphasize greater generality to allow for the largest number of domain specific applications.

\section{Measuring Performance}\label{sec:MEASURING-PERFORMANCE}

Since general search algorithms may vary the total number of sampling steps performed, we measure performance using the expected per-query probability of success,
\begin{align}
 q(T, F) &= \mathbb{E}_{\tilde{P}, H} \left[ \frac{1}{|\tilde{P}|} \sum_{i=1}^{|\tilde{P}|} P_{i}(\omega \in T) \;\bigg|\; F\right], 
\end{align}
where $\tilde{P}$ is the sequence of probability distributions for sampling elements (with one distribution $P_i$ for each time step $i$), $H$ is the search history, and the $T$ and $F$ make the dependence on the search problem explicit. Because this is an expectation over sequences of history tuples and probability distributions, past information forms the basis for $q(T,F)$. $|\tilde{P}|$ denotes the length of the sequence $\tilde{P}$, which equals the number of queries taken. The expectation is taken over all sources of randomness, which includes randomness over possible search histories and any randomness in constructing the various $P_i$ from $h_{0:i-1}$ (if such a construction is not entirely deterministic). Taking the expectation over all sources of randomness is equivalent to the probability of success for samples drawn from an appropriately averaged distribution $\overline{P}(\cdot \mid F)$ (see Lemma~\ref{EXPECTED-PER-QUERY-PERFORMANCE}). Because we are sampling from a fixed (after conditioning and expectation) probability distribution, the expected per-query probability of success for the algorithm is equivalent to the induced amount of probability mass allocated to target elements. Revisiting our fictional security scenario, the probability of capturing an enemy agent is proportional to the amount of security at his location. In the same way, the expected per-query probability of success is equal to the amount of induced probability mass at locations where target elements are placed. Thus, each $\overline{P}(\cdot \mid F)$ demarks an equivalence class of search algorithms mapping to the same averaged distribution; we refer to these equivalence classes as \textbf{search strategies}.

We use uniform random sampling with replacement as our baseline search algorithm, which is a simple, always available strategy and we define $p(T, F)$ as the per-query probability of success for that method. In a natural way, $p(T, F)$ is a measure of the intrinsic difficulty of a search problem~\cite{COST-OF-SUCCESS}, absent of any side-information. The ratio $p(T, F)/q(T,F)$ quantifies the improvement achieved by a search algorithm over simple random sampling. Like an error quantity, when this ratio is small (i.e., less than one) the performance of the algorithm is better than uniform random sampling, but when it is larger than one, performance is worse. We will often write $p(T,F)$ simply as $p$, when the target set and information resource are clear from the context.

\section{Main Results}
We state our main results here with brief discussion, and with proofs given in the Appendix.

\subsection{Famine of Forte}\label{sec:FAVORABLE-FUNCTIONS}
\begin{thm}(Famine of Forte)\label{PROBABILITY-GOOD-SEARCH-PROBLEM}
    Define 
\[
    \uptau_{k} = \{T \mid T \subset \Omega, |T| = k \in \N \}
\] 
and let $\mathcal{B}_m$ denote any set of binary strings, such that the strings are of length $m$ or less. Let 
\begin{align*}
    R &= \{(T, F) \mid T \in \uptau_{k}, F \in \mathcal{B}_m \}, \text{ and } \\
    R_{q_{\text{min}}} &= \{(T, F) \mid T \in \uptau_{k}, F \in \mathcal{B}_m, q(T,F) \geq q_{\text{min}} \},
\end{align*}
where $q(T,F)$ is the expected per-query probability of success for algorithm $\mathcal{A}$ on problem $(\Omega, T, F)$. Then for any $m \in \N$,
\begin{align*}
    \frac{|R_{q_{\text{min}}}|}{|R|} &\leq \frac{p}{q_{\text{min}}}
\end{align*}
and 
\begin{align*}
    \lim_{m\rightarrow\infty}\frac{|R_{q_{\text{min}}}|}{|R|} &\leq \frac{p}{q_{\text{min}}}
\end{align*}
where $p=k/|\Omega|$.
\end{thm}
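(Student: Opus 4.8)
The plan is to reduce the whole statement to a Markov-inequality argument, with the essential preliminary observation being that, for the fixed black-box algorithm $\mathcal{A}$, the averaged sampling distribution $\overline{P}(\cdot \mid F)$ depends only on the information resource $F$ and not on the target set $T$ (the algorithm observes $T$ only through the final success check, never through its query choices). By Lemma~\ref{EXPECTED-PER-QUERY-PERFORMANCE} I can therefore write $q(T,F) = \overline{P}(\omega \in T \mid F) = \sum_{\omega \in T} \overline{P}(\omega \mid F)$, expressing the per-query success probability as the probability mass that a single averaged distribution places on the $k$ target elements, linearly in the indicator of $T$.

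The key computation is then to fix $F$ and sum $q(T,F)$ over all target sets $T \in \uptau_k$. Swapping the order of summation, each fixed $\omega \in \Omega$ is counted once for every size-$k$ subset containing it, i.e.\ $\binom{|\Omega|-1}{k-1}$ times, so that
\begin{align*}
\sum_{T \in \uptau_k} q(T,F) = \binom{|\Omega|-1}{k-1}\sum_{\omega \in \Omega}\overline{P}(\omega \mid F) = \binom{|\Omega|-1}{k-1},
\end{align*}
since $\overline{P}(\cdot\mid F)$ is a probability distribution summing to one. Dividing by $|\uptau_k| = \binom{|\Omega|}{k}$ shows that the average of $q(T,F)$ over all target sets of size $k$ is exactly $k/|\Omega| = p$, independently of $F$. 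Because $q \geq 0$, Markov's inequality then bounds the number of favorable target sets for this fixed $F$ by $|\{T \in \uptau_k : q(T,F) \geq q_{\text{min}}\}| \leq (p/q_{\text{min}})\,|\uptau_k|$.

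To finish, I would sum this per-$F$ bound over all $F \in \mathcal{B}_m$: since $R_{q_{\text{min}}}$ partitions according to its $F$-coordinate and the bound holds for each $F$ separately,
\begin{align*}
|R_{q_{\text{min}}}| = \sum_{F \in \mathcal{B}_m} |\{T : q(T,F) \geq q_{\text{min}}\}| \leq \frac{p}{q_{\text{min}}}\,|\uptau_k|\,|\mathcal{B}_m| = \frac{p}{q_{\text{min}}}\,|R|,
\end{align*}
using $|R| = |\uptau_k|\,|\mathcal{B}_m|$, and dividing through gives the stated inequality. This bound is uniform in $m$ (it never references $m$), so the limiting statement as $m \to \infty$ follows immediately by taking limits of a constant. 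The same uniformity explains why letting $\mathcal{B}_m$ be an \emph{arbitrary} set of bounded-length strings costs nothing: the per-$F$ average is $p$ regardless of which resources are included.

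The step I expect to be the crux is justifying that $\overline{P}(\cdot \mid F)$ carries no dependence on $T$, since the linearity of $q(T,F)$ in the target-set indicator is exactly what makes the sum over $\uptau_k$ collapse to a single binomial coefficient and yields a per-$F$ average that is independent of both $F$ and the particular algorithm. I would want to confirm that Lemma~\ref{EXPECTED-PER-QUERY-PERFORMANCE} indeed delivers $q(T,F)$ in the form $\sum_{\omega \in T}\overline{P}(\omega\mid F)$; once that representation is in hand, everything downstream is a counting identity together with Markov's inequality.
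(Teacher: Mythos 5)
Your proposal is correct and follows essentially the same route as the paper: it invokes Lemma~\ref{EXPECTED-PER-QUERY-PERFORMANCE} to write $q(T,F)$ as the mass $\overline{P}(\cdot\mid F)$ places on $T$, and your counting identity plus Markov's inequality is exactly the content of the paper's Lemma~\ref{SUBSET-SELECTION-LEMMA} (there phrased as $\E[S^\top P]=k/n$ for a uniformly random $k$-hot vector $S$), after which aggregation over $F$ (you sum; the paper takes a supremum) and the uniformity of the bound in $m$ give both claims. The only cosmetic difference is that the paper establishes existence of the limit via a monotone-convergence argument over nested resource sets, whereas you simply note the bound is uniform in $m$, which suffices for the stated inequality.
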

We see that for small $p$ (problems with sparse target sets) favorable search problems are rare if we desire a strong probability of success. The larger $q_\text{min}$, the smaller the proportion. In many real-world settings, we are given a difficult search problem (with minuscule $p$) and we hope that our algorithm has a reasonable chance of achieving success within a limited number of queries. According to this result, the proportion of problems fulfilling such criteria is also minuscule. Only if we greatly relax the minimum performance demanded, so that $q_\text{min}$ approaches the scale of $p$, can we hope to easily stumble upon such accommodating search problems. If not, insight and/or auxiliary information (e.g., convexity constraints, smoothness assumptions, domain knowledge, etc.) are required to find problems for which an algorithm excels.

This result has many applications, as will be shown in Section~\ref{sec:EXAMPLES}.

\subsection{Conservation of Information}\label{CONSERVATION-OF-INFORMATION}
\begin{cor}(Conservation of Active Information of Expectations)\label{cor:CONSERVATION-ACTIVE-INFO}
Define $I_{q(T,F)} = -\logg p/q(T,F)$ and let 
\begin{align*}
    R &= \{(T, F) \mid T \in \uptau_{k}, F \in \mathcal{B}_m \}, \text{ and } \\ 
    R_{b} &= \{(T, F) \mid T \in \uptau_{k}, F \in \mathcal{B}_m, I_{q(T,F)} \geq b \}. 
\end{align*}
Then for any $m \in \N$
\begin{align*}
    \frac{|R_{b}|}{|R|} &\leq 2^{-b}.
\end{align*}
\end{cor}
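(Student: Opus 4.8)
The plan is to observe that this corollary is a direct algebraic restatement of Theorem~\ref{PROBABILITY-GOOD-SEARCH-PROBLEM}, obtained by applying the active information transform to the performance ratio. The entire argument reduces to unwinding the definition of $I_{q(T,F)}$ and invoking the Famine of Forte bound; there is no independent combinatorial work to do.

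First I would rewrite the threshold defining $R_b$ directly in terms of $q(T,F)$. Since $I_{q(T,F)} = -\logg\, p/q(T,F) = \logg\, q(T,F)/p$ (with the logarithm taken base $2$, as dictated by the target bound $2^{-b}$), the condition $I_{q(T,F)} \geq b$ is equivalent to $q(T,F)/p \geq 2^{b}$, that is,
\begin{align*}
    q(T,F) \geq p\, 2^{b}.
\end{align*}
Because the map $q \mapsto \logg\, q/p$ is strictly increasing in $q$ for fixed $p > 0$, this inequality is an exact reformulation rather than a relaxation: the set $R_b$ coincides with the set $R_{q_{\text{min}}}$ of Theorem~\ref{PROBABILITY-GOOD-SEARCH-PROBLEM} under the identification $q_{\text{min}} = p\, 2^{b}$.

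With this identification in hand I would simply apply the Famine of Forte theorem. Substituting $q_{\text{min}} = p\, 2^{b}$ into the bound $|R_{q_{\text{min}}}|/|R| \leq p/q_{\text{min}}$ yields
\begin{align*}
    \frac{|R_b|}{|R|} = \frac{|R_{q_{\text{min}}}|}{|R|} \leq \frac{p}{p\, 2^{b}} = 2^{-b},
\end{align*}
which is the claimed bound, valid for every $m \in \N$. Notice the theorem's ratio bound collapses to \emph{exactly} $2^{-b}$, so nothing is lost in the transform.

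I do not anticipate any genuine obstacle, since the content of the corollary lies entirely in its interpretation as a bit-counting statement. The only point requiring minor care is the degenerate regime $p\, 2^{b} > 1$: there $q_{\text{min}}$ exceeds $1$, so $R_b$ is empty (no strategy can exceed a per-query success probability of one) and the bound holds vacuously, consistent with $2^{-b} \geq 0$. One should also confirm the logarithm base is $2$ so that the active-information scale matches the bit interpretation advertised earlier; this is precisely what makes the reading ``$b$ bits of expected advantage cost at least $b$ bits to locate'' exact.
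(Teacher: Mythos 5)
Your proposal is correct and follows essentially the same route as the paper: unwind the definition of $I_{q(T,F)}$ to see that $I_{q(T,F)} \geq b$ corresponds to $q(T,F) \geq p\,2^{b}$, then invoke Theorem~\ref{PROBABILITY-GOOD-SEARCH-PROBLEM} with $q_{\text{min}} = p\,2^{b}$ to get the bound $p/(p\,2^{b}) = 2^{-b}$. Your added remarks on the exactness of the equivalence and the degenerate case $p\,2^{b} > 1$ are fine refinements but do not change the argument.
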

The active information transform quantifies improvement in a search over a baseline search method~\cite{COST-OF-SUCCESS}, measuring gains in search performance in terms of information (bits). This allows one to precisely quantify the proportion of favorable functions in relation to the number $b$ bits improvement desired. Active information provides a nice geometric interpretation of improved search performance, namely, that the improved search method is equivalent to a uniform random sampler on a reduced search space. Similarly, a degraded-performance search is equivalent to a uniform random sampler (with replacement) on a larger search space. 

Applying this transform, we see that finding a search problem for which an algorithm effectively reduces the search space by $b$ bits requires at least $b$ bits, so information is conserved in this context. Assuming you have no domain knowledge to guide the process of finding a search problem for which your search algorithm excels, you are unlikely to stumble upon one by chance; indeed, they are exponentially rare in the amount of improvement sought.

\subsection{Famine of Favorable Strategies}
\begin{thm}\label{SATISFYING-STRATEGY-IMPLICATIONS}(Famine of Favorable Strategies)
    For any fixed search problem $(\Omega, t, f)$, set of probability mass functions $\mathcal{P} = \{P : P \in \mathbb{R}^{|\Omega|}, \sum_j P_j = 1\}$, and a fixed threshold $q_\text{min} \in [0, 1]$,
    \[
        \frac{\mu(\mathcal{G}_{t, q_\text{min}})}{\mu(\mathcal{P})} \leq \frac{p}{q_\text{min}},
    \]
    where $\mathcal{G}_{t, q_\text{min}} = \{P : P \in \mathcal{P}, t^\top P \geq q_\text{min} \}$ and $\mu$ is Lebesgue measure. Furthermore, the proportion of possible search strategies giving at least $b$ bits of active information of expectations is no greater than $2^{-b}$. 
\end{thm}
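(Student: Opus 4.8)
The plan is to recognize $t^\top P$ as exactly the per-query success probability of the strategy $P$, namely its induced probability mass on the target coordinates, so that $\mathcal{G}_{t, q_\text{min}}$ is precisely the set of strategies whose success probability meets the threshold. I would then normalize the $(|\Omega|-1)$-dimensional Lebesgue measure on the simplex $\mathcal{P}$ to obtain a uniform probability measure, under which the desired ratio $\mu(\mathcal{G}_{t,q_\text{min}})/\mu(\mathcal{P})$ becomes $\PP[t^\top P \geq q_\text{min}]$. The bound then follows from Markov's inequality applied to the nonnegative random variable $t^\top P$.

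First I would establish that $\E[P_j] = 1/|\Omega|$ for every coordinate $j$ under the uniform measure on $\mathcal{P}$. This follows from symmetry: coordinate permutations act on the simplex by isometries and so preserve the normalized measure, forcing all $\E[P_j]$ to be equal; since they sum to $\E[\sum_j P_j] = 1$, each equals $1/|\Omega|$. By linearity, $\E[t^\top P] = \sum_j t_j \E[P_j] = k/|\Omega| = p$, recovering exactly the baseline uniform-sampling success probability.

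Next I would apply Markov's inequality: since $t^\top P \ge 0$, for $q_\text{min} > 0$ we get $\PP[t^\top P \ge q_\text{min}] \le \E[t^\top P]/q_\text{min} = p/q_\text{min}$ (the case $q_\text{min}=0$ being trivial since the bound then exceeds one), which is the first claim. For the active-information statement, I would unfold the definition $I_q = -\log(p/q) = \log(q/p)$, so that demanding at least $b$ bits is equivalent to $t^\top P \ge p\,2^{b}$. Substituting $q_\text{min} = p\,2^{b}$ into the already-proven bound yields a proportion of at most $p/(p\,2^{b}) = 2^{-b}$, giving the second claim as an immediate corollary.

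The main obstacle is not the inequality itself but the measure-theoretic bookkeeping: $\mathcal{P}$ has zero $|\Omega|$-dimensional Lebesgue measure, so $\mu$ must be read as the induced $(|\Omega|-1)$-dimensional Hausdorff measure on the simplex, and I must verify that coordinate-permutation isometries genuinely preserve this induced measure so that the symmetry computation of $\E[P_j]$ is valid. Once that is in place, the remaining steps are routine.
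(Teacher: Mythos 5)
Your proposal is correct and takes essentially the same route as the paper's own proof: the paper likewise reinterprets $\mu(\mathcal{P})^{-1}$ as a uniform probability density on the simplex, applies Markov's inequality to the nonnegative random variable $t^\top P$ (via its Lemma on satisfying strategies), and obtains the $2^{-b}$ bound by substituting $q_\text{min} = p\,2^{b}$ exactly as you do. The only cosmetic difference is that the paper computes $\E[P] = \left(1/|\Omega|\right)\mathbf{1}$ by identifying the uniform measure on the simplex with the symmetric Dirichlet distribution with $\alpha = 1$, whereas you derive the same fact from permutation symmetry of the normalized measure --- both arguments are valid.
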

Thus, not only are favorable problems rare, so are favorable strategies. Whether you hold fixed the algorithm and try to match a problem to it, or hold fixed the problem and try to match a strategy to it, both are provably difficult. Because matching problems to algorithms is hard, seemingly serendipitous agreement between the two calls for further explanation, serving as evidence against blind matching by independent mechanisms (especially for very sparse targets embedded in very large spaces). More importantly, this result places hard quantitative constraints (similar to minimax bounds in statistical learning theory) on information costs for automated machine learning, which attempts to match learning algorithms to learning problems~\cite{thornton2013auto, AUTOML}.

\subsection{Success Under Dependence}\label{MUTUAL-INFO-RESULT}
\begin{thm}(Success Under Dependence)\label{MUTUAL-INFORMATION-BOUND}
Define 
\[
    q = \E_{T, F}\left[q(T,F)\right]
\]
and note that
\[
    q = \E_{T, F}\left[\;\overline{P}(\omega \in T | F) \right] = \Pr(\omega \in T; \mathcal{A}).
\]
Then, 
\begin{align*}
        q \leq \frac{I(T; F) + D(P_T \| \mathcal{U}_T) + 1}{I_{\Omega}}
    \end{align*}
    where $I_{\Omega} = -\logg k/|\Omega|$, $D(P_T \| \mathcal{U}_T)$ is the Kullback-Leibler divergence between the marginal distribution on $T$ and the uniform distribution on $T$, and $I(T; F)$ is the mutual information. Alternatively, we can write
\begin{align*}
    q \leq \frac{H(\mathcal{U}_T) - H(T \mid F) + 1}{I_{\Omega}}
\end{align*}
where $H(\mathcal{U}_T) = \logg\binom{|\Omega|}{k}$.
\end{thm}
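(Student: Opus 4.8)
The plan is to prove the equivalent entropy form $H(T\mid F) \leq H(\mathcal{U}_T) + 1 - q\,I_\Omega$ and then rearrange. First I would record the identity $I(T;F) + D(P_T\|\mathcal{U}_T) = H(\mathcal{U}_T) - H(T\mid F)$, which follows from $I(T;F) = H(T) - H(T\mid F)$ together with $D(P_T\|\mathcal{U}_T) = H(\mathcal{U}_T) - H(T)$ (using $\mathcal{U}_T(T) = \binom{|\Omega|}{k}^{-1}$); this shows the two displayed bounds coincide, so it suffices to establish the entropy form. Invoking Lemma~\ref{EXPECTED-PER-QUERY-PERFORMANCE}, I would write $q(T,F) = \overline{P}(\omega\in T\mid F) = \sum_{\omega\in T}\overline{P}(\omega\mid F)$ and decompose $q = \E_F[q_f]$ and $H(T\mid F) = \E_F[H(T\mid F=f)]$, reducing the theorem to a per-$f$ bound that I can then average.

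The heart of the argument is a core lemma. Fixing $f$, write $P=\overline{P}(\cdot\mid f)$, let $\pi = P_{T\mid F=f}$ be the conditional law of the target, and set $s(T) = \sum_{\omega\in T}P(\omega)$ so that $q_f = \E_{T\sim\pi}[s(T)]$. I claim $H(\pi) \leq \logg\binom{|\Omega|}{k} + 1 - q_f\,I_\Omega$. To prove it I would apply Gibbs' inequality: for any nonnegative weights $w$ over the $k$-subsets with $\sum_T w(T)\leq 1$, one has $H(\pi) \leq \E_{\pi}[-\logg w(T)]$. The key move is the choice $w(T) = \tfrac{1}{2}\binom{|\Omega|}{k}^{-1}(|\Omega|/k)^{s(T)}$, engineered so that $-\logg w(T) = 1 + \logg\binom{|\Omega|}{k} - s(T)\,I_\Omega$; taking the $\pi$-expectation then gives exactly the claimed bound, provided $w$ is a sub-probability.

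Verifying $\sum_T w(T)\leq 1$ is the main obstacle, and it is the step that produces the $+1$. It reduces to showing $\E_{T\sim\mathcal{U}_T}[(|\Omega|/k)^{s(T)}] \leq 2$. Here I would use the chord bound $a^{s}\leq 1 - s + s\,a$, valid for $a\geq 0$ and $s\in[0,1]$ by convexity, with $a = |\Omega|/k$ and $s = s(T)$; note $s(T)\in[0,1]$ since $P$ is a distribution and $T\subseteq\Omega$, so the bound applies. This linearizes the exponential, and since $\E_{\mathcal{U}_T}[s(T)] = \sum_\omega P(\omega)\,(k/|\Omega|) = k/|\Omega| = p$ under the uniform target law, the expectation collapses to $1 - p + p\cdot(|\Omega|/k) = 2 - p \leq 2$. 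I expect the delicate points to be checking $s(T)\in[0,1]$ (so the chord bound is legitimate), pinning down the factor $2$ exactly so the logarithm contributes precisely one bit, and keeping a base-$2$ logarithm throughout so that $\logg 2 = 1$ and $I_\Omega = \logg(|\Omega|/k)$.

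Finally I would take $\E_F$ of the per-$f$ inequality. Since $\logg\binom{|\Omega|}{k} = H(\mathcal{U}_T)$ and $I_\Omega$ are constants while both $H(T\mid F=f)$ and $q_f$ average linearly, I obtain $H(T\mid F) \leq H(\mathcal{U}_T) + 1 - q\,I_\Omega$, which rearranges to the second displayed bound and, through the identity established in the first paragraph, to the mutual-information form.
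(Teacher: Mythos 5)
Your proof is correct, but it takes a genuinely different route from the paper's. The paper proves this theorem as a reversed generalization of Fano's inequality: it introduces the success indicator $Z = \1(X \in T)$, expands $H(Z, T \mid X)$ via the chain rule in two ways, invokes the data processing inequality ($H(T\mid F) \leq H(T\mid X)$, since the query point depends on $T$ only through $F$), bounds $H(T\mid Z, X)$ by counting target sets in the success and failure branches ($\log\binom{|\Omega|}{k}$ and $\log\binom{|\Omega|-1}{k-1}$), and finally bounds $H(Z\mid X) \leq 1$ --- that last step is where its $+1$ arises, interpreted as the contribution of ``dumb luck.'' You never introduce $Z$ or the data processing inequality: you fix $f$, work directly with the conditional law $\pi = P_{T\mid F=f}$, and apply the Gibbs/cross-entropy inequality with the engineered sub-probability weights $w(T) = \tfrac{1}{2}\binom{|\Omega|}{k}^{-1}(|\Omega|/k)^{s(T)}$, verifying normalization via the chord bound $a^{s}\leq 1-s+sa$ together with $\E_{\mathcal{U}_T}[s(T)] = p$; your $+1$ is the $\log 2$ coming from the factor $\tfrac{1}{2}$. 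Your supporting steps all check out: $s(T)\in[0,1]$ because $\overline{P}(\cdot\mid f)$ is a genuine distribution (Lemma~\ref{EXPECTED-PER-QUERY-PERFORMANCE}); $q_f = \E_{\pi}[s(T)]$ because $\overline{P}(\cdot\mid f)$ does not depend on $T$; and both $H(T\mid F=f)$ and $q_f$ average linearly over $F$, so the per-$f$ bound lifts to the stated one (both arguments implicitly need $I_\Omega > 0$, i.e.\ $k < |\Omega|$). As for what each approach buys: the paper's Fano-style route gives the probabilistic interpretation of the additive $1$ as the entropy of the success indicator and situates the result within a classical family of inequalities; your route is more self-contained (only Jensen and convexity) and actually exposes slack --- since your normalization computation yields $\sum_T w(T) \leq (2-p)/2$, rescaling the weights by $\tfrac{1}{2-p}$ instead of $\tfrac{1}{2}$ proves the marginally sharper bound $q\, I_{\Omega} \leq H(\mathcal{U}_T) - H(T\mid F) + \log(2-p)$, which recovers the theorem because $\log(2-p) \leq 1$.
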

Thus the bound on expected probability of success improves monotonically with the amount of dependence between target sets and information resources. We quantify the dependence using mutual information under any fixed joint distribution on $\uptau_{k}$ and $\mathcal{B}_m$. We see that $I_{\Omega}$ measures the relative target sparseness of the search problem, and can be interpreted as the information cost of locating a target element in the absence of side-information. $D(P_T\|\mathcal{U}_T)$ is naturally interpreted as the predictability of the target sets, since large values imply the probable occurrence of only a small number of possible target sets. The mutual information $I(T;F)$ is the amount of exploitable information the external resource contains regarding $T$; lowering the mutual information lowers the maximum expected probability of success for the algorithm. Lastly, the $1$ in the numerator upper bounds the contribution of pure randomness, as explained in the Appendix. Thus, this expression constrains the relative contributions of predictability, problem difficulty, side-information, and randomness for a successful search, providing a satisfyingly simple and interpretable upper bound on the probability of successful search.

\section{Examples}\label{sec:EXAMPLES}

\subsection{Binary Classification}

It has been suggested that machine learning represents a type of search through parameter or concept space~\cite{mitchell1982generalization}. Supporting this view, we can represent binary classification problems within our framework as follows:
\begin{itemize}
    \item $\mathcal{A}$ - classification algorithm, such as an SVM.        
    \item $\Omega$ - space of possible concepts over an instance space.
    \item $T$ - Set of all hypotheses with less than 10\% classification error on test set, for example.  
    \item $F$ - set of training examples.
        \begin{itemize}
           \item $F(\emptyset)$ - full set of training data.
           \item $F(c)$ - loss on training data for concept $c$.
        \end{itemize}
    \item $(\Omega, T, F)$ - binary classification learning task.
\end{itemize}
The space of possible binary concepts is $\Omega$, with the true concept being an element in that space. In our example, let $|\Omega| = 2^{100}$. The target set consists of the set of all concepts in that space that (1) are consistent with the training data (which we will assume all are), and (2) differ from the truth in at most 10\% of positions on the generalization held-out dataset. Thus, $|T| = \sum_{i=0}^{10}\binom{100}{i}$. Let us assume the marginal distribution on $T$ is uniform, which isn't necessary but simplifies the calculation. The external information resource $F$ is the set of training examples. The algorithm uses the training examples (given by $F(\emptyset)$) to produce a distribution over the space of concepts; for deterministic algorithms, this is a degenerate distribution on exactly one element. A single query is then taken (i.e., a concept is output), and we assess the probability of success for the single query. By Theorem~\ref{MUTUAL-INFORMATION-BOUND}, the expected chances of outputting a concept with at least 90\% generalization accuracy is thus no greater than $\frac{I(T; F) + 1}{I_{\Omega}} \approx \frac{I(T; F)}{I_{\Omega}} \leq \frac{I(T; F)}{59}$. The denominator is the information cost of specifying at least one element of the target set and the numerator represents the information resources available for doing so. When the mutual information meets (or exceeds) that cost, success can be ensured for any algorithm perfectly mining the available mutual information. When noise reduces the mutual information below the information cost, the expected probability of success becomes strictly bounded in proportion to that ratio.

\subsection{General Learning Problems}

Vapnik presents a generalization of learning that applies to classification, regression, and density estimation~\cite{vapnik1999overview}, which we can translate into our framework. Following Vapnik, let $P(z)$ be defined on space $Z$, and consider the parameterized set of functions $Q_{\alpha}(z), \alpha \in \Lambda$. The goal is to minimize $R(\alpha) = \int Q_{\alpha}(z)dP(z)$ for $\alpha \in \Lambda$, when $P(z)$ is unknown but an i.i.d.\ sample $z_1,\ldots,z_{\ell}$ is given. Let $R_{emp}(\alpha) = \frac{1}{\ell}\sum_{i=1}^{\ell} Q_{\alpha}(z_i)$ be the empirical risk.

To reduce this general problem to a search problem within our framework, assume $\Lambda$ is finite, choose $\epsilon \in \mathbb{R}_{\geq 0}$, and let
\begin{itemize}
    \item $\Omega = \Lambda$;
    \item $T = \{\alpha : R(\alpha) - \argmin_{\alpha' \in \Lambda} R(\alpha') < \epsilon$\};
    \item $F = \{z_1, \ldots, z_{\ell}\}$;
    \item $F(\emptyset) = \{z_1, \ldots, z_{\ell}\}$; \text{ and }
    \item $F(\alpha) = R_{emp}(\alpha)$.
\end{itemize}
Thus, any finite problem representable in Vapnik's statistical learning framework is also directly representable within our search framework.

\subsection{Hyperparameter Optimization}

Given that sequential hyperparameter optimization is a literal search through a space of hyperparameter configurations, our results are directly applicable. The search space $\Omega$ consists of all the possible hyperparameter configurations (appropriately discretized in the case of numerical hyperparameters). The target set $T$ is determined by the particular learning algorithm the configurations are applied to, the performance metric used, and the level of performance desired. Let $X$ denote a set of points sampled from the space, and let the information gained from the sample become the external information resource $f$. Given that resource, we have the following theorem:
\begin{thm}
    Given a search algorithm $\mathcal{A}$, a finite discrete hyperparameter configuration space $\Omega$, a set $X$ of points sampled from that search space, and information resource $f$ that is a function of $X$, let $\Omega' := \Omega \setminus X$, $\uptau_k = \{T \mid T \subset \Omega', |T| = k \in \N \}$, and $\uptau_{k,q_{\text{min}}} = \{T \mid T \in \uptau_k, q(T,f) \geq q_{\text{min}} \}$, where $q(T, f)$ is the expected per-query probability of success for algorithm $\mathcal{A}$ under $T$ and $f$. Then,
\begin{align*}
    \frac{|\uptau_{k,q_{\text{min}}}|}{|\uptau_k|} &\leq \frac{p'}{q_{\text{min}}}
\end{align*}
where $p' = k/|\Omega'|$
\end{thm}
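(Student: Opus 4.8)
The plan is to recognize this statement as a direct analogue of Theorem~\ref{PROBABILITY-GOOD-SEARCH-PROBLEM} (Famine of Forte), specialized to a single fixed information resource $f$ and with the reduced space $\Omega' = \Omega \setminus X$ playing the role of the search space. The essential mechanism is an averaging argument followed by Markov's inequality, so I would first linearize $q(T,f)$ in the target set. By the averaged-distribution characterization (Lemma~\ref{EXPECTED-PER-QUERY-PERFORMANCE}), the expected per-query probability of success can be written as $q(T,f) = \overline{P}(\omega \in T \mid f) = \sum_{\omega \in T} \overline{P}(\omega \mid f)$, where $\overline{P}(\cdot \mid f)$ is the single averaged sampling distribution induced by $\mathcal{A}$ on $f$. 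This collapses the entire randomized, history-dependent search into a fixed probability vector, making $q(T,f)$ a simple sum of masses over the elements of $T$.

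Next I would average $q(T,f)$ uniformly over all of $\uptau_k$, i.e.\ over every size-$k$ subset of $\Omega'$, and swap the order of summation:
\begin{align*}
\frac{1}{|\uptau_k|}\sum_{T \in \uptau_k} q(T,f) &= \frac{1}{\binom{|\Omega'|}{k}} \sum_{\omega \in \Omega'} \overline{P}(\omega \mid f)\,\bigl|\{T \in \uptau_k : \omega \in T\}\bigr| \\ &= \frac{\binom{|\Omega'|-1}{k-1}}{\binom{|\Omega'|}{k}} \sum_{\omega \in \Omega'} \overline{P}(\omega \mid f).
\end{align*}
The combinatorial identity $\binom{|\Omega'|-1}{k-1}/\binom{|\Omega'|}{k} = k/|\Omega'| = p'$ handles the counting, and the remaining factor satisfies $\sum_{\omega \in \Omega'} \overline{P}(\omega \mid f) \leq 1$ because $\overline{P}(\cdot \mid f)$ is a probability distribution over the full space $\Omega \supseteq \Omega'$. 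Hence the uniform average of $q(T,f)$ over $\uptau_k$ is at most $p'$.

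Finally, treating $q(\,\cdot\,,f)$ as a nonnegative random variable under the uniform distribution on $\uptau_k$, I would apply Markov's inequality:
\[
\frac{|\uptau_{k,q_{\text{min}}}|}{|\uptau_k|} = \PP_{T \sim \mathcal{U}(\uptau_k)}\!\left[q(T,f) \geq q_{\text{min}}\right] \leq \frac{\E_{T \sim \mathcal{U}(\uptau_k)}[q(T,f)]}{q_{\text{min}}} \leq \frac{p'}{q_{\text{min}}},
\]
which is exactly the claimed bound.

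The one genuinely new point --- and the main thing to get right --- is the restriction to $\Omega'$: the algorithm's averaged distribution is defined over all of $\Omega$, so it may allocate mass to the already-sampled points $X$, and only the mass landing in $\Omega'$ can ever hit a target contained in $\Omega'$. Recognizing that this leaked mass makes $\sum_{\omega \in \Omega'}\overline{P}(\omega \mid f) \leq 1$ (rather than $=1$) is precisely what lets the reduced sparseness $p' = k/|\Omega'|$ appear in place of $k/|\Omega|$, and it is the only step where this theorem departs from the proof of Theorem~\ref{PROBABILITY-GOOD-SEARCH-PROBLEM}.
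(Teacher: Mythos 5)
Your proof is correct, and its engine is the same one the paper uses: the paper proves this theorem in a single line, by citing Theorem~\ref{PROBABILITY-GOOD-SEARCH-PROBLEM}, whose own proof is exactly the reduction you perform --- Lemma~\ref{EXPECTED-PER-QUERY-PERFORMANCE} to collapse the algorithm into one averaged vector $\overline{P}(\cdot \mid f)$, followed by an averaging-plus-Markov argument (packaged there as Lemma~\ref{SUBSET-SELECTION-LEMMA} rather than written out as you do). Counting incidences $|\{T \in \uptau_k : \omega \in T\}| = \binom{|\Omega'|-1}{k-1}$ and then applying Markov's inequality is precisely what Lemma~\ref{SUBSET-SELECTION-LEMMA} does with $k$-hot vectors, so you have inlined the machinery the paper invokes by reference. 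The one genuine difference favors you: applying Theorem~\ref{PROBABILITY-GOOD-SEARCH-PROBLEM} verbatim with $\Omega'$ as the search space is not entirely immediate, because the algorithm's averaged distribution lives on all of $\Omega$ and its restriction to $\Omega'$ is only a sub-probability vector, whereas Lemma~\ref{SUBSET-SELECTION-LEMMA} is stated for vectors summing to exactly one. The paper's ``follows directly'' silently relies on the fact that the Markov step needs only $\sum_{\omega \in \Omega'} \overline{P}(\omega \mid f) \leq 1$ --- mass leaked onto the already-sampled points $X$ can only tighten the bound --- and you identify and justify exactly this point, which makes your write-up a more careful rendering of the paper's argument rather than a different one.
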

The proof follows directly from Theorem~\ref{PROBABILITY-GOOD-SEARCH-PROBLEM}. 

The proportion of possible hyperparameter target sets giving an expected probability of success $q_{\text{min}}$ or more is minuscule when $k \ll |\Omega'|$. If we have no additional information beyond that gained from the points $X$, we have no justifiable basis for expecting a successful search. Thus, we must make some assumptions concerning the relationship of the points sampled to the remaining points in $\Omega'$. We can do so by either assuming structure on the search space, such that spatial information becomes informative, or by making an assumption on the process by which $X$ was sampled, so that the sample is representative of the space in quantifiable ways. These assumptions allow $f$ to become informative of the target set $T$, leading to exploitable dependence under Theorem~\ref{MUTUAL-INFORMATION-BOUND}. Thus we see the need for inductive bias in hyperparameter optimization (to expand a term used by Mitchell~\cite{Mitchell1980} for classification), which hints at a strategy for creating more effective hyperparameter optimization algorithms (i.e., through exploitation of spatial structure).

\subsection{One-Size-Fits-All Fitness Functions}

Theorem~\ref{PROBABILITY-GOOD-SEARCH-PROBLEM} gives us an upper bound on the proportion of favorable search problems, but what happens when we have a single, fixed information resource, such as a single fitness function? A natural question to ask is for how many target locations can such a fitness function be useful. More precisely, for a given search algorithm, for what proportion of search space locations can the fitness function raise the expected probability of success, assuming a target element happens to be located at one of those spots? 

Applying Theorem~\ref{PROBABILITY-GOOD-SEARCH-PROBLEM} with $|T| = 1$, we find that a fitness function can significantly raise the probability of locating target elements placed on, at most, 1/$q_\text{min}$ search space elements. We see this as follows. Since $|T| = 1$ and the fitness function is fixed, each search problem maps to exactly one element of $\Omega$, giving $|\Omega|$ possible search problems. The number of $q_\text{min}$-favorable search problems is upper-bounded by
\[
    |\Omega|\frac{p}{q_\text{min}} = |\Omega|\frac{|T|/|\Omega|}{q_\text{min}}  = |\Omega|\frac{1/|\Omega|}{q_\text{min}} = \frac{1}{q_\text{min}}.
\]
Because this expression is independent of the size of the search space, the number of elements for which a fitness function can strongly raise the probability of success remains fixed even as the size of the search space increases. Thus, for very large search spaces the proportion of favored locations effectively vanishes. There can exist no single fitness function that is strongly favorable for many elements simultaneously, and thus no ``one-size-fits-all'' fitness function.

\subsection{Proliferation of Learning Algorithms}

Our results also help make sense of recent trends in machine learning. They explain why new algorithms continue to be developed, year after year, conference after conference, despite decades of research in machine learning and optimization. Given that algorithms can only perform well on a narrow subset of problems, we must either continue to devise novel algorithms for new problem domains or else move towards flexible algorithms that can modify their behavior solely though parameterization. The latter effectively behave like new strategies for new hyperparameter configurations. The explosive rise of flexible, hyperparameter sensitive algorithms like deep learning methods and vision architectures shows a definite trend towards the latter, with their hyperparameter sensitivity being well known~\cite{snoek2012practical,bergstra2013making}. Furthermore, because flexible algorithms are highly sensitive to their hyperparameters (by definition), this explains the concurrent rise in automated hyperparameter optimization methods~\cite{shahriari2016taking,bergstra2012random,thornton2013auto,wang-jair2016}.

\subsection{Landmark Security}

Returning to our initial toy example, we can now fill in a few details. Our external information resource is the pertinent intelligence data, mined through surveillance. We begin with background knowledge (represented in $F(\emptyset)$), used to make the primary security force placements. Team members on the ground communicate updates back to central headquarters, such as suspicious activity, which are the $F(\omega)$ evaluations used to update the internal information state. Each resource allocated is a query, and manpower constraints limit the number of queries available. Doing more with fewer officers is better, so the hope is to maximize the per-officer probability of stopping the attack.

Our results tell us a few things. First, a fixed strategy can only work well in a limited number of situations. There is little or no hope of a problem being a good match for your strategy if the problem arises independently of it (Theorems~\ref{PROBABILITY-GOOD-SEARCH-PROBLEM} and \ref{SATISFYING-STRATEGY-IMPLICATIONS}). So reliable intelligence becomes key. The better correlated the intelligence reports are with the actual plot, the better a strategy can perform (Theorem~\ref{MUTUAL-INFORMATION-BOUND}). However, even for a fixed search problem with reliable external information resource there is no guarantee of success, if the strategy is chosen poorly; the proportion of good strategies for a fixed problem is no better than the proportion of good problems for a fixed algorithm (Theorem~\ref{SATISFYING-STRATEGY-IMPLICATIONS}). Thus, domain knowledge is crucial in choosing either. Without side-information to guide the match between search strategy and search problem, the expected probability of success is dismal in target-sparse situations.

\section{Conclusion}

A colleague once remarked that the results presented here bring to mind Lake Wobegone, ``where all the children are above average.'' In a world where every published algorithm is ``above average,'' conservation of information reminds us that this cannot be. Improved performance over any subset of problems necessarily implies degraded performance over the remaining problems~\cite{SCHAFFER1994}, and all methods have performance equivalent to random sampling when uniformly averaged over any closed-under-permutation set of problems~\cite{WOLPERTMACREADY,culberson1998futility,CUP}.

But there are many ways to achieve an average. An algorithm might perform well over a large number of problems, only to perform poorly on a small set of remaining problems. A different algorithm might perform close to average on all problems, with slight variations around the mean. How large can the subset of problems with improved performance be? We show that the maximum \emph{proportion} of search problems for which an algorithm can attain $q_{\text{min}}$-favorable performance is bounded from above by $p/q_{\text{min}}$. Thus, an algorithm can only perform well over a narrow subset of possible problems. Not only is there no free lunch, but there is also a famine of favorable problems.

If finding a good search problem for a fixed algorithm is hard, then so is finding a good search algorithm for a fixed problem (Theorem~\ref{SATISFYING-STRATEGY-IMPLICATIONS}). Thus, the matching of problems to algorithms is provably difficult, regardless of which is fixed and which varies. 

Our results paint a more optimistic picture once we restrict ourselves to those search problems for which the external information resource is strongly informative of the target set, as is often assumed to be the case. For those problems, the expected per-query probability of success is upper bounded by a function involving the mutual information between target sets and external information resources (like training datasets and objective functions). The lower the mutual information, the lower chance of success, but the bound improves as the dependence is strengthened.

The search framework we propose is general enough to find application in many problem areas, such as machine learning, evolutionary search, and hyperparameter optimization. The results are not just of theoretical importance, but help explain real-world phenomena, such as the need for exploitable dependence in machine learning and the empirical difficulty of automated learning~\cite{AUTOML}. Our results help us understand the growing popularity of deep learning methods and unavoidable interest in automated hyperparameter tuning methods. Extending the framework to continuous settings and other problem areas (such as active learning) is the focus of ongoing research.

\section*{Acknowledgement}
I would like to thank Akshay Krishnamurthy, Junier Oliva, Ben Cowley and Willie Neiswanger for their discussions regarding Lemma~\ref{SUBSET-SELECTION-LEMMA}. I am indebted to Geoff Gordon for help proving Lemma~\ref{SUBSET-SELECTION-LEMMA}, and to Cosma Shalizi for providing many good insights, challenges and ideas concerning this manuscript.

\bibliographystyle{IEEEtran}
\bibliography{references}

\begin{thebibliography}{10}
\providecommand{\url}[1]{#1}
\csname url@samestyle\endcsname
\providecommand{\newblock}{\relax}
\providecommand{\bibinfo}[2]{#2}
\providecommand{\BIBentrySTDinterwordspacing}{\spaceskip=0pt\relax}
\providecommand{\BIBentryALTinterwordstretchfactor}{4}
\providecommand{\BIBentryALTinterwordspacing}{\spaceskip=\fontdimen2\font plus
\BIBentryALTinterwordstretchfactor\fontdimen3\font minus
  \fontdimen4\font\relax}
\providecommand{\BIBforeignlanguage}[2]{{%
\expandafter\ifx\csname l@#1\endcsname\relax
\typeout{** WARNING: IEEEtran.bst: No hyphenation pattern has been}%
\typeout{** loaded for the language `#1'. Using the pattern for}%
\typeout{** the default language instead.}%
\else
\language=\csname l@#1\endcsname
\fi
#2}}
\providecommand{\BIBdecl}{\relax}
\BIBdecl

\bibitem{WOLPERTMACREADY}
D.~Wolpert and W.~Macready, ``No free lunch theorems for optimization,''
  \emph{IEEE Transactions on Evolutionary Computation}, no.~1, pp. 67--82,
  April 1997.

\bibitem{culberson1998futility}
J.~Culberson, ``On the futility of blind search: An algorithmic view of `no
  free lunch','' \emph{Evolutionary Computation}, vol.~6, no.~2, pp. 109--127,
  1998.

\bibitem{english1996evaluation}
T.~English, ``Evaluation of evolutionary and genetic optimizers: No free
  lunch,'' in \emph{Evolutionary Programming V: Proceedings of the Fifth Annual
  Conference on Evolutionary Programming}, 1996, pp. 163--169.

\bibitem{CUP}
C.~Schumacher, M.~Vose, and L.~Whitley, ``{The no free lunch and problem
  description length},'' in \emph{Proceedings of the Genetic and Evolutionary
  Computation Conference (GECCO-2001)}, 2001, pp. 565--570.

\bibitem{whitley2000functions}
D.~Whitley, ``Functions as permutations: regarding no free lunch, walsh
  analysis and summary statistics,'' in \emph{Parallel Problem Solving from
  Nature PPSN VI}.\hskip 1em plus 0.5em minus 0.4em\relax Springer, 2000, pp.
  169--178.

\bibitem{NO-MORE-LUNCH}
T.~English, ``No more lunch: Analysis of sequential search,'' in
  \emph{Evolutionary Computation, 2004. CEC2004. Congress on}, vol.~1.\hskip
  1em plus 0.5em minus 0.4em\relax IEEE, 2004, pp. 227--234.

\bibitem{droste2002optimization}
S.~Droste, T.~Jansen, and I.~Wegener, ``Optimization with randomized search
  heuristics--the (a)nfl theorem, realistic scenarios, and difficult
  functions,'' \emph{Theoretical Computer Science}, vol. 287, no.~1, pp.
  131--144, 2002.

\bibitem{dembski2010search}
W.~Dembski and R.~Marks~II, ``The search for a search: Measuring the
  information cost of higher level search,'' \emph{Journal of Advanced
  Computational Intelligence and Intelligent Informatics}, vol.~14, no.~5, pp.
  475--486, 2010.

\bibitem{GENERAL-THEORY-OF-SEARCH}
W.~Dembski, W.~Ewert, and R.~Marks~II, ``A general theory of information cost
  incurred by successful search,'' in \emph{Biological Information}.\hskip 1em
  plus 0.5em minus 0.4em\relax World Scientific, 2013, ch.~3, pp. 26--63.

\bibitem{SCHAFFER1994}
C.~Schaffer, ``{A conservation law for generalization performance},'' in
  \emph{Proceedings of the Eleventh International Machine Learning Conference},
  W.~W. Cohen and H.~Hirsch, Eds.\hskip 1em plus 0.5em minus 0.4em\relax
  Rutgers University, New Brunswick, NJ, 1994, pp. 259--265.

\bibitem{MONTANEZ-TARGET}
G.~D. Monta\~nez, ``Bounding the number of favorable functions in stochastic
  search,'' in \emph{Evolutionary Computation (CEC), 2013 IEEE Congress on},
  June 2013, pp. 3019--3026.

\bibitem{ENGLISH-LEARNING-HARD}
T.~English, ``{Optimization is easy and learning is hard in the typical
  function},'' in \emph{Evolutionary Computation, 2000. Proceedings of the 2000
  Congress on}, vol.~2.\hskip 1em plus 0.5em minus 0.4em\relax IEEE, 2000, pp.
  924--931.

\bibitem{rao1995every}
R.~B. Rao, D.~Gordon, and W.~Spears, ``For every generalization action, is
  there really an equal and opposite reaction? analysis of the conservation law
  for generalization performance,'' \emph{Urbana}, vol.~51, p. 61801.

\bibitem{COST-OF-SUCCESS}
W.~Dembski and R.~Marks~II, ``Conservation of information in search: Measuring
  the cost of success,'' \emph{Systems, Man and Cybernetics, Part A: Systems
  and Humans, IEEE Transactions on}, vol.~39, no.~5, pp. 1051 --1061, sept.
  2009.

\bibitem{laloudouana2003data}
D.~LaLoudouana and M.~B. Tarare, ``Data set selection,'' \emph{Journal of
  Machine Learning Gossip}, vol.~1, pp. 11--19, 2003.

\bibitem{thornton2013auto}
C.~Thornton, F.~Hutter, H.~H. Hoos, and K.~Leyton-Brown, ``Auto-weka: Combined
  selection and hyperparameter optimization of classification algorithms,'' in
  \emph{Proceedings of the 19th ACM SIGKDD international conference on
  Knowledge discovery and data mining}.\hskip 1em plus 0.5em minus 0.4em\relax
  ACM, 2013, pp. 847--855.

\bibitem{AUTOML}
I.~Guyon, K.~Bennett, G.~Cawley, H.~J. Escalante, S.~Escalera, T.~K. Ho,
  N.~Macià, B.~Ray, M.~Saeed, A.~Statnikov, and E.~Viegas, ``Design of the 2015
  chalearn automl challenge,'' in \emph{2015 International Joint Conference on
  Neural Networks (IJCNN)}, July 2015, pp. 1--8.

\bibitem{mitchell1982generalization}
T.~M. Mitchell, ``Generalization as search,'' \emph{Artificial intelligence},
  vol.~18, no.~2, pp. 203--226, 1982.

\bibitem{vapnik1999overview}
V.~N. Vapnik, ``An overview of statistical learning theory,'' \emph{IEEE
  transactions on neural networks}, vol.~10, no.~5, pp. 988--999, 1999.

\bibitem{Mitchell1980}
T.~M. Mitchell, ``The need for biases in learning generalizations,'' Rutgers
  University, Tech. Rep., 1980.

\bibitem{snoek2012practical}
J.~Snoek, H.~Larochelle, and R.~P. Adams, ``Practical bayesian optimization of
  machine learning algorithms,'' in \emph{Advances in neural information
  processing systems}, 2012, pp. 2951--2959.

\bibitem{bergstra2013making}
J.~Bergstra, D.~Yamins, and D.~Cox, ``Making a science of model search:
  Hyperparameter optimization in hundreds of dimensions for vision
  architectures,'' in \emph{Proc. 30th International Conference on Machine
  Learning (ICML-13)}, 2013.

\bibitem{shahriari2016taking}
B.~Shahriari, K.~Swersky, Z.~Wang, R.~P. Adams, and N.~de~Freitas, ``Taking the
  human out of the loop: A review of bayesian optimization,'' \emph{Proceedings
  of the IEEE}, vol. 104, no.~1, pp. 148--175, 2016.

\bibitem{bergstra2012random}
J.~Bergstra and Y.~Bengio, ``Random search for hyper-parameter optimization,''
  \emph{The Journal of Machine Learning Research}, vol.~13, no.~1, pp.
  281--305, 2012.

\bibitem{wang-jair2016}
Z.~Wang, F.~Hutter, M.~Zoghi, D.~Matheson, and N.~de~Feitas, ``Bayesian
  optimization in a billion dimensions via random embeddings,'' \emph{Journal
  of Artificial Intelligence Research (JAIR)}, vol.~55, pp. 361--387, February
  2016.

\bibitem{fano1961transmission}
R.~Fano and D.~Hawkins, ``Transmission of information: A statistical theory of
  communications,'' \emph{American Journal of Physics}, vol.~29, no.~11, pp.
  793--794, 1961.

\end{thebibliography}

\section{Appendix: Proofs}
\label{PROOFS}

\setcounter{thm}{0}
\setcounter{cor}{0}
\setcounter{lem}{0}

\begin{lem}\label{EXPECTED-PER-QUERY-PERFORMANCE}(Expected Per Query Performance From Expected Distribution) Let $t$ be a target set, $q(t, f)$ the expected per-query probability of success for an algorithm and $\nu$ be the conditional joint measure induced by that algorithm over finite sequences of probability distributions and search histories, conditioned on external information resource $f$. Denote a probability distribution sequence by $\tilde{P}$ and a search history by $h$. Let $\mathcal{U}(\tilde{P})$ denote a uniform distribution on elements of $\tilde{P}$ and define $\overline{P}(x\mid f) = \int \mathbb{E}_{P \sim \mathcal{U}(\tilde{P})}[P(x)] d\nu(\tilde{P}, h \mid f)$. Then,
\begin{align*}
    q(t, f) &= \overline{P}(X\in t | f)
\end{align*}
where $\overline{P}(X | f)$ is a probability distribution on the search space.
\end{lem}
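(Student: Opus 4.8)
The plan is to unfold both definitions and show they coincide after swapping a finite sum with the integral against $\nu$. First I would observe that since $\mathcal{U}(\tilde{P})$ is the uniform distribution over the entries of a sequence $\tilde{P} = (P_1, \dots, P_{|\tilde{P}|})$, for any fixed point $x \in \Omega$ we have $\mathbb{E}_{P \sim \mathcal{U}(\tilde{P})}[P(x)] = \frac{1}{|\tilde{P}|}\sum_{i=1}^{|\tilde{P}|} P_i(x)$. This rewrites the defining expression for $\overline{P}(x \mid f)$ as the integral of an averaged per-query mass, which is the bridge to the definition of $q(t,f)$.

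Before computing $q$ I would verify that $\overline{P}(\cdot \mid f)$ is genuinely a probability distribution on $\Omega$, as the statement asserts. Nonnegativity is immediate since each $P_i(x) \geq 0$. For normalization I would sum over $x \in \Omega$; because $\Omega$ is finite the sum commutes with the integral by linearity, and interchanging gives $\int \mathbb{E}_{P \sim \mathcal{U}(\tilde{P})}[\sum_{x \in \Omega} P(x)]\, d\nu = \int 1 \, d\nu = 1$, using that each $P_i$ is normalized and that $\nu$ is a probability measure.

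For the main identity I would expand $\overline{P}(X \in t \mid f) = \sum_{x \in t} \overline{P}(x \mid f)$, pull the finite sum inside the integral, and recognize that $\sum_{x \in t}\mathbb{E}_{P \sim \mathcal{U}(\tilde{P})}[P(x)] = \mathbb{E}_{P \sim \mathcal{U}(\tilde{P})}[P(X \in t)] = \frac{1}{|\tilde{P}|}\sum_{i=1}^{|\tilde{P}|} P_i(X \in t)$. The integral against $\nu(\tilde{P}, h \mid f)$ is by definition the conditional expectation $\mathbb{E}_{\tilde{P}, H}[\,\cdot \mid f]$, so the right-hand side becomes precisely the defining expression for $q(t, f)$, completing the proof.

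The only real subtlety, and the step I would be most careful about, is the interchange of summation and integration. Since the search space is finite this is just linearity of the integral rather than an appeal to Tonelli or Fubini, but I would still state explicitly that finiteness of $\Omega$ (and hence of the sums over $x$) is what licenses the swap, and that the averaging over the possibly variable-length sequences $\tilde{P}$ and over histories $h$ is already absorbed into the integral against the joint measure $\nu$. Everything else is bookkeeping.
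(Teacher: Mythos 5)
Your proposal is correct and follows essentially the same route as the paper's proof: expanding $\mathbb{E}_{P \sim \mathcal{U}(\tilde{P})}[P(x)]$ as the time-averaged mass $\frac{1}{|\tilde{P}|}\sum_i P_i(x)$, verifying that $\overline{P}(\cdot \mid f)$ is a proper distribution, and then exchanging the finite sum over $x \in t$ with the integral against $\nu$ to recover the defining expectation for $q(t,f)$. Your explicit remark that finiteness of $\Omega$ is what licenses the interchange is a nice touch the paper leaves implicit.
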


\begin{proof}
    Begin by expanding the definition of $\mathbb{E}_{P \sim \mathcal{U}(\tilde{P})}[P(x)]$, being the average probability mass on element $x$ under sequence $\tilde{P}$:
    \begin{align*}
        \mathbb{E}_{P \sim \mathcal{U}(\tilde{P})}[P(x)] &= \frac{1}{|\tilde{P}|} \sum_{i=1}^{|\tilde{P}|} P_{i}(x).
    \end{align*}   
    We note that $\overline{P}(x|f)$ is a proper probability distribution, since
    \begin{enumerate}
        \item $\overline{P}(x|f) \geq 0$, being the integral of a nonnegative function;
        \item $\overline{P}(x|f) \leq 1$, as
             \begin{align*}
                \overline{P}(x|f) &\leq \int \left[\frac{1}{|\tilde{P}|} \sum_{i=1}^{|\tilde{P}|} 1 \right]d\nu(\tilde{P}, h | f) = 1;
             \end{align*}
        \item $\overline{P}(x|f)$ sums to one, because
            \begin{align*}
                 \sum_{x}\overline{P}(x\mid f) &= \sum_{x}\left[\int \frac{1}{|\tilde{P}|} \sum_{i=1}^{|\tilde{P}|}  P_{i}(x) d\nu(\tilde{P}, h|f) \right] \\
                                         &= \int \frac{1}{|\tilde{P}|} \sum_{i=1}^{|\tilde{P}|} \left[\sum_{x}P_{i}(x)\right] d\nu(\tilde{P}, h|f)  \\
                                         &= \int \frac{|\tilde{P}|}{|\tilde{P}|} d\nu(\tilde{P}, h|f) \\
                                         &= 1.
            \end{align*}
    \end{enumerate}
    Finally, 
    \begin{align*}
        \overline{P}(X\in t|f)
            &= \sum_{x}\1_{x\in t} \overline{P}(x|f) \\
            &= \sum_{x}\left[\1_{x\in t} \int \mathbb{E}_{P \sim \mathcal{U}(\tilde{P})}[P(x)] d\nu(\tilde{P}, h|f)\right] \\
            &= \sum_{x}\left[\1_{x\in t} \int \left[\frac{1}{|\tilde{P}|} \sum_{i=1}^{|\tilde{P}|} P_{i}(x) \right] d\nu(\tilde{P}, h|f)\right] \\
            &= \int \frac{1}{|\tilde{P}|} \sum_{i=1}^{|\tilde{P}|} \left[\sum_{x} \1_{x\in t} P_{i}(x) \right] d\nu(\tilde{P}, h|f) \\
            &= \mathbb{E}_{\tilde{P}, H} \left[ \frac{1}{|\tilde{P}|} \sum_{i=1}^{|\tilde{P}|} P_{i}(X\in t) \bigg| f \right]\\
            &= q(t, f).
    \end{align*}
\end{proof}

\begin{lem}\label{SUBSET-SELECTION-LEMMA}(Maximum Number of Satisfying Vectors)
    Given an integer $1 \leq k \leq n$, a set $\mathcal{S} = \{\mathbf{s} : \mathbf{s} \in \{0, 1\}^n, \|\mathbf{s}\| = \sqrt{k}\}$ of all $n$-length $k$-hot binary vectors, a set $\mathcal{P} = \{P : P \in \mathbb{R}^n, \sum_j P_j = 1\}$ of discrete $n$-dimensional simplex vectors, and a fixed scalar threshold $\epsilon \in [0, 1]$, then for any fixed $P \in \mathcal{P}$,
\[
    \sum_{\mathbf{s}\in \mathcal{S}} \1_{\mathbf{s}^\top P \geq \epsilon} \leq \frac{1}{\epsilon}\binom{n-1}{k-1}
\]
where $\mathbf{s}^\top P$ denotes the vector dot product between $\mathbf{s}$ and $P$. 
\end{lem}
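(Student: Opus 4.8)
The plan is to use an elementary averaging (Markov-type) argument. First I would observe that for a $k$-hot vector $\mathbf{s}$, the dot product $\mathbf{s}^\top P = \sum_j s_j P_j$ is simply the sum of the $k$ entries of $P$ selected by the support of $\mathbf{s}$. Rather than reasoning about individual vectors, I would work with the aggregate quantity $\sum_{\mathbf{s} \in \mathcal{S}} \mathbf{s}^\top P$ and evaluate it by exchanging the order of summation, writing it as $\sum_{\mathbf{s}} \sum_j s_j P_j = \sum_j P_j \left( \sum_{\mathbf{s}} s_j \right)$.

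The central combinatorial step is to compute $\sum_{\mathbf{s}} s_j$, which counts how many $k$-hot vectors have a $1$ in coordinate $j$. Pinning the $j$-th entry to $1$ and choosing the remaining $k-1$ ones freely among the other $n-1$ coordinates yields exactly $\binom{n-1}{k-1}$ such vectors, a count that is independent of $j$ by symmetry. Substituting this back gives $\sum_{\mathbf{s} \in \mathcal{S}} \mathbf{s}^\top P = \binom{n-1}{k-1} \sum_j P_j = \binom{n-1}{k-1}$, since $P$ lies on the simplex and $\sum_j P_j = 1$. In other words, the average value of $\mathbf{s}^\top P$ over all $k$-hot vectors is $\binom{n-1}{k-1}/\binom{n}{k} = k/n$, which is exactly the baseline per-query success probability $p$ appearing throughout the paper.

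Finally, I would invoke Markov's inequality in its discrete counting form. Letting $N = \sum_{\mathbf{s} \in \mathcal{S}} \1_{\mathbf{s}^\top P \geq \epsilon}$ denote the quantity to be bounded, and using that every dot product is nonnegative, I would write $\binom{n-1}{k-1} = \sum_{\mathbf{s}} \mathbf{s}^\top P \geq \sum_{\mathbf{s} : \mathbf{s}^\top P \geq \epsilon} \mathbf{s}^\top P \geq \epsilon N$. Rearranging yields $N \leq \frac{1}{\epsilon} \binom{n-1}{k-1}$, the claimed bound.

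The argument is short enough that little can go wrong, and the one place deserving genuine care is the symmetric counting identity $\sum_{\mathbf{s}} s_j = \binom{n-1}{k-1}$; everything else is a one-line Markov bound. I expect the only subtlety to be the degenerate case $\epsilon = 0$, where division by $\epsilon$ is undefined, so I would dispatch that case separately (the inequality is then trivial, with the right-hand side interpreted as $+\infty$), and assume $\epsilon > 0$ in the main argument.
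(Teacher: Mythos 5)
Your proposal is correct and is essentially the paper's own proof: both rest on the counting identity $\sum_{\mathbf{s}\in\mathcal{S}} s_j = \binom{n-1}{k-1}$ (equivalently, that the mean of $\mathbf{s}^\top P$ is $k/n$) followed by Markov's inequality, with the $\epsilon = 0$ case dispatched separately. The only difference is cosmetic: the paper phrases the averaging step probabilistically via a uniformly random $S \in \mathcal{S}$ and $\Pr(S^\top P \geq \epsilon) \leq \frac{1}{\epsilon}\E[S^\top P]$, while you carry out the same computation as a direct counting bound.
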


\begin{proof}
    For $\epsilon = 0$, the bound holds trivially. For $\epsilon > 0$, let $S$ be a random quantity that takes values $\mathbf{s}$ uniformly in the set $\mathcal{S}$. Then, for any fixed $P \in \mathcal{P}$,
\begin{align*}
    \sum_{\mathbf{s}\in \mathcal{S}} \1_{\mathbf{s}^\top P \geq \epsilon} &= \binom{n}{k} \mathbb{E}\left[\1_{S^\top P \geq \epsilon}\right] \\
          &= \binom{n}{k} \Pr\left(S^\top P \geq \epsilon \right).
\end{align*}
    Let $\mathbf{1}$ denotes the all ones vector. Under a uniform distribution on random quantity $S$ and because $P$ does not change with respect to $\mathbf{s}$, we have
\begin{align*}
    \mathbb{E}\left[S^\top P\right] &= \binom{n}{k}^{-1}\sum_{\mathbf{s}\in \mathcal{S}}\mathbf{s}^\top P \\
         &= P^\top\binom{n}{k}^{-1}\sum_{\mathbf{s}\in \mathcal{S}}\mathbf{s} \\
         &= P^\top\frac{\mathbf{1}\binom{n-1}{k-1}}{\binom{n}{k}} \\
         &= P^\top\frac{\mathbf{1}\binom{n-1}{k-1}}{\frac{n}{k}\binom{n-1}{k-1}} \\
         &= \frac{k}{n}P^\top\mathbf{1} \\
         &= \frac{k}{n}
\end{align*}
since $P$ must sum to $1$.

Noting that $S^\top P \geq 0$, we use Markov's inequality to get
\begin{align*}
\sum_{\mathbf{s}\in \mathcal{S}} \1_{\mathbf{s}^\top P \geq \epsilon}
          &= \binom{n}{k} \Pr\left(S^\top P \geq \epsilon \right) \\
          &\leq \binom{n}{k} \frac{1}{\epsilon}\mathbb{E}\left[S^\top P\right] \\
          &= \binom{n}{k} \frac{1}{\epsilon} \frac{k}{n} \\
          &= \frac{1}{\epsilon}\binom{n-1}{k-1}.
\end{align*}
\end{proof}

\begin{thm}(Famine of Forte)
 Define 
\[
    \uptau_{k} = \{T \mid T \subset \Omega, |T| = k \in \N \}
\] 
and let $\mathcal{B}_m$ denote any set of binary strings, such that the strings are of length $m$ or less. Let 
\begin{align*}
    R &= \{(T, F) \mid T \in \uptau_{k}, F \in \mathcal{B}_m \}, \text{ and } \\
    R_{q_{\text{min}}} &= \{(T, F) \mid T \in \uptau_{k}, F \in \mathcal{B}_m, q(T,F) \geq q_{\text{min}} \},
\end{align*}
where $q(T,F)$ is the expected per-query probability of success for algorithm $\mathcal{A}$ on problem $(\Omega, T, F)$. Then for any $m \in \N$,
\begin{align*}
    \frac{|R_{q_{\text{min}}}|}{|R|} &\leq \frac{p}{q_{\text{min}}}
\end{align*}
and 
\begin{align*}
    \lim_{m\rightarrow\infty}\frac{|R_{q_{\text{min}}}|}{|R|} &\leq \frac{p}{q_{\text{min}}}
\end{align*}
where $p=k/|\Omega|$.
\end{thm}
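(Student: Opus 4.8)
The plan is to reduce the count over search problems $(T,F)$ to a per-resource application of the subset-selection bound in Lemma~\ref{SUBSET-SELECTION-LEMMA}, exploiting the fact that that bound is uniform over all simplex vectors. The crucial bridge is Lemma~\ref{EXPECTED-PER-QUERY-PERFORMANCE}, which tells us that for each fixed information resource $F$ there is a single averaged distribution $\overline{P}(\cdot \mid F)$ on $\Omega$ with $q(T,F) = \overline{P}(X \in T \mid F)$. Writing $\mathbf{t} \in \{0,1\}^{|\Omega|}$ for the $k$-hot target function associated with $T$ and identifying $\overline{P}(\cdot \mid F)$ with a vector $P_F$ on the simplex $\mathcal{P}$, this reads $q(T,F) = \mathbf{t}^\top P_F$, so the favorability condition $q(T,F) \geq q_{\text{min}}$ becomes exactly the linear-threshold condition $\mathbf{t}^\top P_F \geq q_{\text{min}}$ appearing in Lemma~\ref{SUBSET-SELECTION-LEMMA}.

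First I would fix an arbitrary $F \in \mathcal{B}_m$ and count the target sets in $\uptau_k$ that are favorable against it. Since $P_F$ is a fixed simplex vector and $\uptau_k$ is in one-to-one correspondence with the set of $k$-hot binary vectors of length $n = |\Omega|$, Lemma~\ref{SUBSET-SELECTION-LEMMA} with $\epsilon = q_{\text{min}}$ gives $\sum_{T \in \uptau_k}\1_{q(T,F)\geq q_{\text{min}}} \leq \frac{1}{q_{\text{min}}}\binom{|\Omega|-1}{k-1}$. The essential point is that this bound is independent of $F$: it holds uniformly for every resource, precisely because Lemma~\ref{SUBSET-SELECTION-LEMMA} holds for every fixed $P$.

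Next I would sum over $F \in \mathcal{B}_m$. Because $R_{q_{\text{min}}}$ decomposes as a disjoint union over $F$ and the per-$F$ count is uniformly bounded, we obtain $|R_{q_{\text{min}}}| \leq |\mathcal{B}_m|\,\frac{1}{q_{\text{min}}}\binom{|\Omega|-1}{k-1}$, while $|R| = |\mathcal{B}_m|\binom{|\Omega|}{k}$. The factor $|\mathcal{B}_m|$ cancels in the ratio, which is exactly why the result is distribution-free and holds for any choice of $\mathcal{B}_m$. Applying the identity $\binom{|\Omega|}{k} = \frac{|\Omega|}{k}\binom{|\Omega|-1}{k-1}$ collapses the ratio of binomials to $k/|\Omega| = p$, yielding $|R_{q_{\text{min}}}|/|R| \leq p/q_{\text{min}}$. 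Finally, since this holds for every $m \in \N$ and the bound itself is independent of $m$, the limit statement follows immediately by taking $m \to \infty$.

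I do not expect a genuine obstacle once the two lemmas are in hand; the only step requiring care is the reduction in the first paragraph, namely verifying that the single averaged distribution $\overline{P}(\cdot \mid F)$ supplied by Lemma~\ref{EXPECTED-PER-QUERY-PERFORMANCE} is what plays the role of the fixed $P$ in Lemma~\ref{SUBSET-SELECTION-LEMMA}, so that conditioning on $F$ genuinely linearizes $q(T,F)$ in the target indicator $\mathbf{t}$. After that the argument is a uniform count followed by the elementary identity $\binom{n-1}{k-1}/\binom{n}{k} = k/n$.
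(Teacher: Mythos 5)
Your proposal is correct and follows essentially the same route as the paper: both proofs use Lemma~\ref{EXPECTED-PER-QUERY-PERFORMANCE} to linearize $q(T,F)$ as a dot product $\mathbf{t}^\top \overline{P}_F$ with the averaged distribution for a fixed resource, then apply Lemma~\ref{SUBSET-SELECTION-LEMMA} with $\epsilon = q_{\text{min}}$ to get a per-$F$ count of $\frac{1}{q_{\text{min}}}\binom{|\Omega|-1}{k-1}$ favorable targets, after which the $|\mathcal{B}_m|$ factor cancels (the paper phrases this as a supremum over $f$ where you sum a uniform per-$F$ bound, which is the same estimate). The only divergence is cosmetic: for the limit statement the paper additionally runs a monotone-convergence argument on the nested family of all strings of length at most $m$ to establish that its bounding sequence has a limit, whereas you simply note that a bound holding uniformly in $m$ controls the limit; since the claimed inequality is all the theorem asserts, your shorter treatment suffices.
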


\begin{proof}
	We begin by defining a set $\mathcal{S}$ of all $|\Omega|$-length target functions with exactly $k$ ones, namely, $\mathcal{S} = \{\mathbf{s} : \mathbf{s} \in \{0, 1\}^{|\Omega|}, \|\mathbf{s}\| = \sqrt{k}\}$. For each of these, we have $|\mathcal{B}_m|$ external information resources. The total number of search problems is therefore
\begin{align}\label{NUMBER-OF-SEARCH-PROBLEMS}
    \binom{|\Omega|}{k}|\mathcal{B}_m|.
\end{align}
We seek to bound the proportion of possible search problems for which $q(\mathbf{s}, f) \geq q_\text{min}$ for any threshold $q_\text{min} \in (0, 1]$. Thus,
\begin{align}
    \frac{|R_{q_{\text{min}}}|}{|R|} &\leq \frac{|\mathcal{B}_m| \sup_{f}\left[\sum_{\mathbf{s} \in \mathcal{S}} \1_{q(\mathbf{s}, f) \geq q_\text{min}}\right] }{|\mathcal{B}_m|\binom{|\Omega|}{k}} \\
                                     &= \binom{|\Omega|}{k}^{-1}\sum_{\mathbf{s} \in \mathcal{S}} \1_{q(\mathbf{s}, f^*) \geq q_\text{min}},
\end{align}
where $f^* \in \mathcal{B}_m$ denotes the arg sup of the expression. Therefore,
\begin{align*}
    \frac{|R_{q_{\text{min}}}|}{|R|} &\leq \binom{|\Omega|}{k}^{-1}\sum_{\mathbf{s} \in \mathcal{S}} \1_{q(\mathbf{s}, f^*) \geq q_\text{min}} \\
        &= \binom{|\Omega|}{k}^{-1}\sum_{\mathbf{s} \in \mathcal{S}} \1_{\overline{P}(\omega \in \mathbf{s}|f^*) \geq q_\text{min}} \\
        &= \binom{|\Omega|}{k}^{-1}\sum_{\mathbf{s} \in \mathcal{S}} \1_{\mathbf{s}^\top\overline{P}_{f^*} \geq q_\text{min}}
\end{align*}
where the first equality follows from Lemma~\ref{EXPECTED-PER-QUERY-PERFORMANCE}, $\omega \in \mathbf{s}$ means the target function $\mathbf{s}$ evaluated at $\omega$ is one, and $\overline{P}_{f^*}$ represents the $|\Omega|$-length probability vector defined by $\overline{P}(\cdot|f^*)$. By Lemma~\ref{SUBSET-SELECTION-LEMMA}, we have
\begin{align}
\binom{|\Omega|}{k}^{-1}\sum_{\mathbf{s} \in \mathcal{S}} \1_{\mathbf{s}^\top\overline{P}_{f^*} \geq q_\text{min}} 
&\leq \binom{|\Omega|}{k}^{-1} \left[\frac{1}{q_\text{min}}\binom{|\Omega|-1}{k-1}\right] \notag{}\\
&= \frac{k}{|\Omega|}\frac{1}{q_\text{min}} \notag{}\\
&= p / q_\text{min}\label{eq:final-bound-thm-pgsp}
\end{align}
proving the result for finite external information resources.

To extend to infinite external information resources, let $A_m = \{f : f \in \{0,1\}^{\ell}, \ell \in \N, \ell \leq m\}$ and define
\begin{align}
    a_m &:= \frac{|A_m| \sup_{f \in A_m}\left[\sum_{\mathbf{s} \in \mathcal{S}} \1_{q(\mathbf{s}, f) \geq q_\text{min}} \right]}{|A_m|\binom{|\Omega|}{k}}, \\
    b_m &:= \frac{|\mathcal{B}_m| \sup_{f \in \mathcal{B}_m}\left[\sum_{\mathbf{s} \in \mathcal{S}} \1_{q(\mathbf{s}, f) \geq q_\text{min}} \right]}{|\mathcal{B}_m|\binom{|\Omega|}{k}}.
\end{align}
We have shown that $a_m \leq p / q_\text{min}$ for each $m \in \N$. Thus,
\begin{align*}
\limsup_{m\rightarrow\infty} \frac{|A_m| \sup_{f \in A_m}\left[\sum_{\mathbf{s} \in \mathcal{S}} \1_{q(\mathbf{s}, f) \geq q_\text{min}} \right]}{|A_m|\binom{|\Omega|}{k}}
    &= \limsup_{m\rightarrow\infty} a_m \\
    &\leq \sup_m a_m \\
    &\leq p / q_\text{min}.
\end{align*}
Next, we use the monotone convergence theorem to show the limit exists. First,
\begin{align}
    \lim_{m\rightarrow\infty} a_m 
        &= \lim_{m\rightarrow\infty} \frac{\sup_{f \in A_m}\left[\sum_{\mathbf{s}} \1_{q(\mathbf{s}, f) \geq q_\text{min}} \right]}{\binom{|\Omega|}{k}}
\end{align}
By construction, the successive $A_m$ are nested with increasing $m$, so the sequence of suprema (and numerator) are increasing, though not necessarily strictly increasing. The denominator is not dependent on $m$, so $\{a_m\}$ is an increasing sequence. Because it is also bounded above by $p / q_\text{min}$, the limit exists by monotone convergence. Thus,
\[
    \lim_{m\rightarrow\infty} a_m = \limsup_{m\rightarrow\infty} a_m \leq p / q_\text{min}.
\]
Lastly, 
\begin{align*}
    \lim_{m\rightarrow\infty} b_m
    &= \lim_{m\rightarrow\infty} \frac{|\mathcal{B}_m| \sup_{f \in \mathcal{B}_m}\left[\sum_{\mathbf{s} \in \mathcal{S}} \1_{q(\mathbf{s}, f) \geq q_\text{min}} \right]}{|\mathcal{B}_m|\binom{|\Omega|}{k}} \\
    &= \lim_{m\rightarrow\infty} \frac{\sup_{f \in \mathcal{B}_m}\left[\sum_{\mathbf{s} \in \mathcal{S}} \1_{q(\mathbf{s}, f) \geq q_\text{min}} \right]}{\binom{|\Omega|}{k}} \\
    &\leq \lim_{m\rightarrow\infty} \frac{\sup_{f \in A_m}\left[\sum_{\mathbf{s} \in \mathcal{S}} \1_{q(\mathbf{s}, f) \geq q_\text{min}} \right]}{\binom{|\Omega|}{k}} \\
    &= \lim_{m\rightarrow\infty} a_m\\
    &\leq p / q_\text{min}.
\end{align*}
\end{proof}

\begin{cor}(Conservation of Active Information) \label{cor:CONSERVATION-ACTIVE-INFORMATION-EXPECTATIONS}
Define \emph{active information of expectations} as $I_{q(T,F)} = -\logg p/q(T,F)$, where $p$ is the per-query probability of success for uniform random sampling and $q(T,F)$ is the expected per-query probability of success for an alternative search algorithm. Define 
\[
    \uptau_{k} = \{T \mid T \subset \Omega, |T| = k \in \N\}
\] 
and let $\mathcal{B}_m$ denote any set of binary strings, such that the strings are of length $m$ or less. Let 
\begin{align*}
    R &= \{(T, F) \mid T \in \uptau_{k}, F \in \mathcal{B}_m \}, \text{ and } \\ 
    R_{b} &= \{(T, F) \mid T \in \uptau_{k}, F \in \mathcal{B}_m, I_{q(T,F)} \geq b \}. 
\end{align*}
Then for any $m \in \N$
\begin{align*}
    \frac{|R_{b}|}{|R|} &\leq 2^{-b}.
\end{align*}
\end{cor}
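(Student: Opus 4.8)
The plan is to reduce this corollary directly to the Famine of Forte theorem (Theorem~\ref{PROBABILITY-GOOD-SEARCH-PROBLEM}) by rewriting the active-information threshold as an equivalent threshold on the expected per-query probability of success. Treating the logarithm as base two (so that active information is measured in bits), I would first unfold the definition of $I_{q(T,F)}$ and solve the inequality $I_{q(T,F)} \geq b$ for $q(T,F)$:
\begin{align*}
    I_{q(T,F)} \geq b
    &\iff -\logg \frac{p}{q(T,F)} \geq b \\
    &\iff \frac{q(T,F)}{p} \geq 2^{b} \\
    &\iff q(T,F) \geq p\,2^{b}.
\end{align*}
This shows that the event defining $R_b$ is identical to the event defining $R_{q_\text{min}}$ from Theorem~\ref{PROBABILITY-GOOD-SEARCH-PROBLEM} under the particular choice $q_\text{min} = p\,2^{b}$, so that $R_b = R_{q_\text{min}}$ for that value.

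It then remains only to feed this choice of threshold into the Famine of Forte bound. Provided $q_\text{min} = p\,2^{b} \leq 1$, the theorem applies verbatim and yields
\[
    \frac{|R_b|}{|R|} = \frac{|R_{q_\text{min}}|}{|R|} \leq \frac{p}{q_\text{min}} = \frac{p}{p\,2^{b}} = 2^{-b},
\]
which is exactly the claimed bound. The entire content of the corollary is therefore this change of variables; no new combinatorial or measure-theoretic machinery beyond Theorem~\ref{PROBABILITY-GOOD-SEARCH-PROBLEM} is required.

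The only point requiring genuine care — and the sole place the argument could appear to break — is the boundary case $p\,2^{b} > 1$, in which the implied $q_\text{min}$ falls outside the range $(0,1]$ for which Theorem~\ref{PROBABILITY-GOOD-SEARCH-PROBLEM} is stated. I would dispatch this case separately by noting that $q(T,F)$ is a probability and hence at most $1$, so no pair $(T,F)$ can satisfy $q(T,F) \geq p\,2^{b} > 1$; consequently $R_b = \emptyset$ and $|R_b|/|R| = 0 \leq 2^{-b}$ holds trivially. With this edge case handled, the bound holds for every $b$ and every $m \in \N$, making the corollary an essentially immediate consequence of the main theorem.
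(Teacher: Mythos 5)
Your proposal is correct and follows essentially the same route as the paper's own proof: both convert the threshold $I_{q(T,F)} \geq b$ into the equivalent threshold $q(T,F) \geq p\,2^{b}$ and then invoke Theorem~\ref{PROBABILITY-GOOD-SEARCH-PROBLEM} with $q_\text{min} = p\,2^{b}$ to obtain the $2^{-b}$ bound. Your explicit treatment of the boundary case $p\,2^{b} > 1$ (where $R_b = \emptyset$) is a small point of extra rigor that the paper's proof leaves implicit, but it does not change the substance of the argument.
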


\begin{proof}
The proof follows from the definition of active information of expectations and Theorem~\ref{PROBABILITY-GOOD-SEARCH-PROBLEM}. Note,
\begin{align}
    b &\leq -\logg\left(\frac{p}{q(T,F)}\right)
\end{align}
implies 
\begin{align}
    q(T,F) &\geq p2^b.
\end{align}
Since $I_{q(T,F)} \geq b$ implies $q(T,F) \geq p2^b$, the set of problems for which $I_{q(T,F)} \geq b$ can be no bigger than the set for which $q(T,F) \geq p2^b$. By Theorem~\ref{PROBABILITY-GOOD-SEARCH-PROBLEM}, the proportion of problems for which $q(T,F)$ is at least $p2^b$ is no greater than $p/(p2^b)$. Thus,
\begin{align}
    \frac{|R_{b}|}{|R|} &\leq \frac{1}{2^b}.\label{PQ-ACTIVE-BOUND}
\end{align}
\end{proof}

\begin{cor}(Conservation of Expected $I_{\tilde{q}}$)
Define 
\[
    I_{\tilde{q}} = -\logg p/\tilde{q}, 
\]
where $p$ is the per-query probability of success for uniform random sampling and $\tilde{q}$ is the per-query probability of success for an alternative search algorithm. Define 
\[
    \uptau_{k} = \{T \mid T \subset \Omega, |T| = k \in \N\}
\] 
and let $\mathcal{B}_m$ denote any set of binary strings, such that the strings are of length $m$ or less. Let 
\begin{align*}
    R &= \{(T, F) \mid T \in \uptau_{k}, F \in \mathcal{B}_m \}, \text{ and } \\ 
    R_{b} &= \{(T, F) \mid T \in \uptau_{k}, F \in \mathcal{B}_m, \E[I_{\tilde{q}}] \geq b \}. 
\end{align*}
Then for any $m \in \N$
\begin{align*}
    \frac{|R_{b}|}{|R|} &\leq 2^{-b}.
\end{align*}
\end{cor}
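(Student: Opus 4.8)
The plan is to reduce this statement to the preceding corollary (Conservation of Active Information of Expectations) by exploiting the concavity of the active information transform. First I would identify $\tilde{q}$ precisely: it is the random per-query success quantity $\frac{1}{|\tilde{P}|}\sum_{i=1}^{|\tilde{P}|} P_i(\omega \in T)$ that sits \emph{inside} the expectation defining $q(T,F)$, so that by construction $\E[\tilde{q}] = q(T,F)$, where the expectation runs over $\tilde{P}$ and $H$ conditioned on $F$. Rewriting the transform as $I_{\tilde{q}} = -\logg(p/\tilde{q}) = \logg \tilde{q} - \logg p$, I note that as a function of $\tilde{q}$ this is concave (its second derivative is $-1/\tilde{q}^2 < 0$) and increasing.

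The key step is then a single application of Jensen's inequality. Since $I_{\tilde{q}}$ is a concave function of $\tilde{q}$ and $\E[\tilde{q}] = q(T,F)$, we obtain
\begin{align*}
    \E[I_{\tilde{q}}] \;\leq\; -\logg\frac{p}{\E[\tilde{q}]} \;=\; -\logg\frac{p}{q(T,F)} \;=\; I_{q(T,F)},
\end{align*}
where the final equality is just the definition of the active information of expectations used in the previous corollary. Hence the expected active information never exceeds the active information of the expected success probability.

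Finally, I would invoke the set containment this inequality produces. Whenever a problem $(T,F)$ satisfies $\E[I_{\tilde{q}}] \geq b$, the bound above forces $I_{q(T,F)} \geq \E[I_{\tilde{q}}] \geq b$, so $R_b$ is a subset of $\{(T,F) : I_{q(T,F)} \geq b\}$. Applying the Conservation of Active Information of Expectations corollary to this larger set immediately gives $|R_b|/|R| \leq 2^{-b}$, which is exactly the claimed bound.

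The main obstacle here is conceptual rather than computational: one must recognize that in $\E[I_{\tilde{q}}]$ the expectation is taken \emph{after} the concave transform, whereas $I_{q(T,F)}$ applies the transform \emph{after} the expectation, and that Jensen supplies precisely the direction needed to make $R_b$ a subset of the already-controlled favorable set. Once this ordering is seen, the result follows from the prior corollary with no further calculation.
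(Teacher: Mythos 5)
Your proof is correct and matches the paper's own argument essentially step for step: both identify $\E[\tilde{q}] = q(T,F)$, apply Jensen's inequality to the concave map $\tilde{q} \mapsto -\logg(p/\tilde{q})$ to get $\E[I_{\tilde{q}}] \leq I_{q(T,F)}$, and then invoke the Conservation of Active Information of Expectations corollary via the resulting set containment. Your write-up is, if anything, slightly more explicit than the paper's about what $\tilde{q}$ is and why $R_b$ embeds in the previously controlled set.
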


\begin{proof}
By Jensen's inequality and the concavity of $\logg(\tilde{q}/p)$ in $\tilde{q}$, we have
\begin{align*}
b   &\leq \E\left[-\logg\left(\frac{p}{\tilde{q}}\right)\right] \\
    &\leq \logg\left(\frac{\E\left[\tilde{q}\right]}{p}\right) \\
    &= -\logg\left(\frac{p}{q(T,F)}\right)\\
    &= I_{q(T,F)}.
\end{align*}
The result follows by invoking Corollary~\ref{cor:CONSERVATION-ACTIVE-INFORMATION-EXPECTATIONS}.
\end{proof}

\begin{lem}\label{PROBABILITY-GOOD-SEARCH-STRATEGY}(Maximum Proportion of Satisfying Strategies)
    Given an integer $1 \leq k \leq n$, a set $\mathcal{S} = \{\mathbf{s} : \mathbf{s} \in \{0, 1\}^n, \|\mathbf{s}\| = \sqrt{k}\}$ of all $n$-length $k$-hot binary vectors, a set $\mathcal{P} = \{P : P \in \mathbb{R}^n, \sum_j P_j = 1\}$ of discrete $n$-dimensional simplex vectors, and a fixed scalar threshold $\epsilon \in [0, 1]$, then
\[
    \max_{\mathbf{s} \in \mathcal{S}}\frac{\mu(\mathcal{G}_{\mathbf{s}, \epsilon})}{\mu(\mathcal{P})} \leq \frac{1}{\epsilon}\frac{k}{n}
\]
where $\mathcal{G}_{\mathbf{s}, \epsilon} = \{P : P \in \mathcal{P}, \mathbf{s}^\top P \geq \epsilon \}$ and $\mu$ is Lebesgue measure. 
\end{lem}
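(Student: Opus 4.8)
The plan is to mirror the argument of Lemma~\ref{SUBSET-SELECTION-LEMMA}, but with the roles of $\mathbf{s}$ and $P$ interchanged. There we fixed a distribution $P$ and averaged an indicator over the $k$-hot vectors $\mathbf{s} \in \mathcal{S}$; here I fix a single $k$-hot vector $\mathbf{s}$ and integrate over the simplex $\mathcal{P}$. The crucial observation is that the ratio of Lebesgue measures is literally a probability under the uniform distribution on $\mathcal{P}$, so that Markov's inequality will again close the argument once the relevant first moment is in hand.

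First I would dispose of the case $\epsilon = 0$, where the right-hand side is unbounded and the claim is trivial. For $\epsilon > 0$, let $P$ be drawn uniformly (with respect to $\mu$) from $\mathcal{P}$, so that
\[
    \frac{\mu(\mathcal{G}_{\mathbf{s}, \epsilon})}{\mu(\mathcal{P})} = \Pr\left(\mathbf{s}^\top P \geq \epsilon\right).
\]
Next I would compute $\mathbb{E}[\mathbf{s}^\top P]$. Since $\mathcal{P}$ is invariant under permutations of coordinates and $\mu$ is preserved by those permutations, every coordinate $P_j$ shares the same marginal law; combined with the constraint $\sum_j P_j = 1$ this forces $\mathbb{E}[P_j] = 1/n$ for each $j$. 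Because $\mathbf{s}$ is $k$-hot, $\mathbf{s}^\top P$ is the sum of exactly $k$ coordinates of $P$, whence
\[
    \mathbb{E}\left[\mathbf{s}^\top P\right] = \sum_{j : s_j = 1} \mathbb{E}[P_j] = \frac{k}{n},
\]
a value independent of the choice of $\mathbf{s} \in \mathcal{S}$. Finally, as $\mathbf{s}^\top P \geq 0$, Markov's inequality yields $\Pr(\mathbf{s}^\top P \geq \epsilon) \leq \mathbb{E}[\mathbf{s}^\top P]/\epsilon = \frac{1}{\epsilon}\frac{k}{n}$, and since this bound is uniform in $\mathbf{s}$, taking the maximum over $\mathbf{s} \in \mathcal{S}$ preserves it.

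The hard part, though a mild one, will be justifying that $\mathbb{E}[P_j] = 1/n$ under the uniform law on the simplex. This rests entirely on the coordinate-permutation symmetry of $\mathcal{P}$ together with the sum-to-one constraint; the only care required is that $\mu$ be understood as $(n-1)$-dimensional Lebesgue measure on the affine hull of $\mathcal{P}$, so that $\mu(\mathcal{P})$ is finite and the uniform distribution is well defined. No explicit computation of simplex volumes is needed. A pleasing feature is that the same quantity $k/n$ emerges here as in Lemma~\ref{SUBSET-SELECTION-LEMMA}, exposing a duality between counting favorable targets for a fixed strategy and measuring favorable strategies for a fixed target.
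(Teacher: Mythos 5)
Your proposal is correct and follows essentially the same route as the paper's proof: reinterpret the ratio $\mu(\mathcal{G}_{\mathbf{s},\epsilon})/\mu(\mathcal{P})$ as $\Pr(\mathbf{s}^\top P \geq \epsilon)$ under the uniform distribution on the simplex, then close with Markov's inequality applied to the nonnegative variable $\mathbf{s}^\top P$. The only difference is how $\mathbb{E}\left[\mathbf{s}^\top P\right] = k/n$ is obtained: the paper invokes the fact that the uniform distribution on $\mathcal{P}$ is the symmetric Dirichlet with $\alpha = 1$, whose mean vector is $\tfrac{1}{n}\mathbf{1}$, whereas you derive $\mathbb{E}[P_j] = 1/n$ from permutation symmetry of $\mathcal{P}$ and $\mu$ plus the sum-to-one constraint --- a slightly more elementary, self-contained justification of the same fact, and your explicit remark that $\mu$ must be read as $(n-1)$-dimensional Lebesgue measure on the affine hull (so that $\mu(\mathcal{P})$ is finite and nonzero) is a point the paper leaves implicit.
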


\begin{proof}
    Similar results have been proved by others with regard to No Free Lunch theorems~\cite{WOLPERTMACREADY,COST-OF-SUCCESS,CUP,GENERAL-THEORY-OF-SEARCH,english1996evaluation}. Our result concerns the maximum proportion of sufficiently good strategies (not the mean performance of strategies over all problems, as in the NFL case) and is a simplification over previous search-for-a-search results.

    For $\epsilon = 0$, the bound holds trivially. For $\epsilon > 0$, We first notice that the $\mu(\mathcal{P})^{-1}$ term can be viewed as a uniform density over the region of the simplex $\mathcal{P}$, so that the integral becomes an expectation with respect to this distribution, where $P$ is drawn uniformly from $\mathcal{P}$. Thus, for any $\mathbf{s} \in \mathcal{S}$,
\begin{align*}
        \frac{\mu(\mathcal{G}_{\mathbf{s}, \epsilon})}{\mu(\mathcal{P})} 
        &= \int_{\mathcal{P}} \frac{1}{\mu(\mathcal{P})} \left[\1_{\mathbf{s}^\top P \geq \epsilon}\right] d\mu(P) \\
        &= \mathbb{E}_{{P \sim \mathcal{U}(\mathcal{P})}}\left[\1_{\mathbf{s}^\top P \geq \epsilon} \right] \\
        &= \Pr(\mathbf{s}^\top P \geq \epsilon) \\
        &\leq \frac{1}{\epsilon} \mathbb{E}_{{P \sim \mathcal{U}(\mathcal{P})}}\left[\mathbf{s}^\top P \right],
\end{align*}
where the final line follows from Markov's inequality. Since the symmetric Dirichlet distribution in $n$ dimensions with parameter $\alpha=1$ gives the uniform distribution over the simplex, we get 
\begin{align*}
    \mathbb{E}_{{P \sim \mathcal{U}(\mathcal{P})}}\left[ P \right] &= \mathbb{E}_{P \sim \text{Dir}(\alpha = 1)}\left[ P \right] \\
    &= \left(\frac{\alpha}{\sum_{i=1}^{n}\alpha}\right)\mathbf{1} \\
    &= \left(\frac{1}{n}\right)\mathbf{1}, 
\end{align*}
where $\mathbf{1}$ denotes the all ones vector. We have
\begin{align*}
    \frac{1}{\epsilon} \mathbb{E}_{{P \sim \mathcal{U}(\mathcal{P})}}\left[\mathbf{s}^\top P \right] 
        &= \frac{1}{\epsilon} \mathbf{s}^\top \mathbb{E}_{{P \sim \mathcal{U}(\mathcal{P})}}\left[ P \right] \\
        &= \frac{1}{\epsilon} \mathbf{s}^\top\left(\frac{1}{n}\right)\mathbf{1}\\
        &= \frac{1}{\epsilon}\frac{k}{n}.
\end{align*}
\end{proof}

\begin{thm}(Famine of Favorable Strategies)
    For any fixed search problem $(\Omega, t, f)$, set of probability mass functions $\mathcal{P} = \{P : P \in \mathbb{R}^{|\Omega|}, \sum_j P_j = 1\}$, and a fixed threshold $q_\text{min} \in [0, 1]$,
    \[
        \frac{\mu(\mathcal{G}_{t, q_\text{min}})}{\mu(\mathcal{P})} \leq \frac{p}{q_\text{min}},
    \]
    where $\mathcal{G}_{t, q_\text{min}} = \{P : P \in \mathcal{P}, t^\top P \geq q_\text{min} \}$ and $\mu$ is Lebesgue measure. Furthermore, the proportion of possible search strategies giving at least $b$ bits of active information of expectations is no greater than $2^{-b}$. 
\end{thm}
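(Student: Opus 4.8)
The plan is to reduce the first inequality directly to Lemma~\ref{PROBABILITY-GOOD-SEARCH-STRATEGY}, which already bounds the Lebesgue measure of the favorable region of the simplex for any single $k$-hot target vector. First I would observe that for the fixed search problem $(\Omega, t, f)$, the target set $t$ corresponds to exactly one $k$-hot binary vector of length $n = |\Omega|$, where $k = |t|$, and that a search strategy is precisely a point $P$ in the simplex $\mathcal{P}$. By the equivalence established in Lemma~\ref{EXPECTED-PER-QUERY-PERFORMANCE}, the dot product $t^\top P$ equals the probability mass the strategy allocates to the target set, i.e.\ the single-query probability of success $\overline{P}(\omega \in t \mid f)$. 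Hence the favorable region $\mathcal{G}_{t, q_\text{min}} = \{P : t^\top P \geq q_\text{min}\}$ is exactly the set of strategies whose per-query success meets the threshold.

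Next I would invoke Lemma~\ref{PROBABILITY-GOOD-SEARCH-STRATEGY} with $\epsilon = q_\text{min}$ and $n = |\Omega|$. Since that lemma bounds the maximum proportion over all $\mathbf{s} \in \mathcal{S}$, and our fixed $t$ is one such vector, the bound transfers immediately to give
\[
    \frac{\mu(\mathcal{G}_{t, q_\text{min}})}{\mu(\mathcal{P})} \leq \frac{1}{q_\text{min}}\frac{k}{|\Omega|} = \frac{p}{q_\text{min}},
\]
using $p = k/|\Omega|$. This settles the first claim with no additional computation.

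For the active-information statement I would mirror the argument of Corollary~\ref{cor:CONSERVATION-ACTIVE-INFORMATION-EXPECTATIONS}. Writing the active information of expectations for a strategy as $-\logg(p/q)$ with $q = t^\top P$, the requirement of at least $b$ bits is equivalent to $q \geq p\,2^{b}$. Applying the first part of the theorem with threshold $q_\text{min} = p\,2^{b}$ then bounds the proportion of such strategies by $p/(p\,2^{b}) = 2^{-b}$, which completes the proof.

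I expect no genuine obstacle here, as all the real work is carried by Lemma~\ref{PROBABILITY-GOOD-SEARCH-STRATEGY}. The only points demanding care are the identification of a strategy's single-query success probability with $t^\top P$, and verifying that the uniform-over-the-simplex reading of the ratio $\mu(\cdot)/\mu(\mathcal{P})$ matches the interpretation used inside the lemma, so that the measure bound carries over verbatim to the fixed target $t$.
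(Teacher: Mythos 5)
Your proposal is correct and matches the paper's own proof essentially step for step: the paper likewise applies Lemma~\ref{PROBABILITY-GOOD-SEARCH-STRATEGY} with $\mathbf{s} = t$, $\epsilon = q_\text{min}$, $k = |t|$, $n = |\Omega|$ for the first claim, and obtains the second by the same reduction used in Corollary~\ref{cor:CONSERVATION-ACTIVE-INFO} (the threshold $q_\text{min} = p\,2^{b}$ argument), citing Lemma~\ref{EXPECTED-PER-QUERY-PERFORMANCE} for the identification of strategies with single-query success probabilities. No gaps; the two points you flag for care (the $t^\top P$ identification and the uniform-density reading of the measure ratio) are exactly the observations the paper relies on.
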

\begin{proof}
     Applying Lemma~\ref{PROBABILITY-GOOD-SEARCH-STRATEGY}, with $\mathbf{s} = t$, $\epsilon = q_\text{min}$, $k = |t|$, $n = |\Omega|$, and $p = \frac{|t|}{|\Omega|}$, yields the first result, while following the same steps as Corollary~\ref{cor:CONSERVATION-ACTIVE-INFO} gives the second (noting that by Lemma~\ref{EXPECTED-PER-QUERY-PERFORMANCE} each strategy is equivalent to a corresponding $q(t,f)$).
\end{proof}

\begin{thm}(Probability of Success Under Dependence)
Define $\uptau_{k} = \{T \mid T \subset \Omega, |T| = k \in \N\}$ and let $\mathcal{B}_m$ denote any set of binary strings, such that the strings are of length $m$ or less. Define $q$ as the expected per-query probability of success under the joint distribution on $T \in \uptau_{k}$ and $F \in \mathcal{B}_m$ for any fixed algorithm $\mathcal{A}$, so that $q := \E_{T, F}\left[q(T,F)\right]$, namely,
\[
    q = \E_{T, F}\left[\;\overline{P}(\omega \in T | F) \right] = \Pr(\omega \in T; \mathcal{A}).
\]
Then, 
\begin{align*}
        q \leq \frac{I(T; F) + D(P_T \| \mathcal{U}_T) + 1}{I_{\Omega}}
    \end{align*}
    where $I_{\Omega} = -\logg k/|\Omega|$, $D(P_T \| \mathcal{U}_T)$ is the Kullback-Leibler divergence between the marginal distribution on $T$ and the uniform distribution on $T$, and $I(T; F)$ is the mutual information. Alternatively, we can write
\begin{align*}
    q \leq \frac{H(\mathcal{U}_T) - H(T \mid F) + 1}{I_{\Omega}}
\end{align*}
where $H(\mathcal{U}_T) = \logg\binom{|\Omega|}{k}$.
\end{thm}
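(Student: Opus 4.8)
The plan is to prove the equivalent entropic form first and then recover the mutual-information form by substitution. Writing $\mathcal{U}_T$ for the uniform law on $\uptau_k$ (so $\mathcal{U}_T(T) = \binom{|\Omega|}{k}^{-1}$ and $H(\mathcal{U}_T) = \log\binom{|\Omega|}{k}$), I would observe that $I(T;F) = H(T) - H(T\mid F)$ and $D(P_T\|\mathcal{U}_T) = H(\mathcal{U}_T) - H(T)$, so that
\[
I(T;F) + D(P_T\|\mathcal{U}_T) = H(\mathcal{U}_T) - H(T\mid F).
\]
Hence the two displayed bounds coincide, and it suffices to establish $q\,I_\Omega + H(T\mid F) \le H(\mathcal{U}_T) + 1$.

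Next I would introduce the single queried point. By Lemma~\ref{EXPECTED-PER-QUERY-PERFORMANCE}, drawing $(T,F)\sim P_{T,F}$ and then $\omega\sim\overline{P}(\cdot\mid F)$ yields $q = \Pr(\omega\in T)$, and since $\omega$ depends on $T$ only through $F$ we have the Markov chain $T - F - \omega$. The data-processing inequality then gives $I(T;\omega)\le I(T;F)$, equivalently $H(T\mid F)\le H(T\mid\omega)$, so it is enough to upper bound $H(T\mid\omega)$.

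The core step is a per-query entropy bound. For each fixed realization $\omega_0$ of the query, let $B = \1_{\omega_0\in T}$, a deterministic function of $T$, and apply the chain rule $H(T\mid\omega=\omega_0) = H(B\mid\omega=\omega_0) + H(T\mid B,\omega=\omega_0)$. The first term is at most $1$ bit. For the second, conditioned on $\{B=1\}$ the set $T$ is supported on the $\binom{|\Omega|-1}{k-1}$ many $k$-subsets containing $\omega_0$, and conditioned on $\{B=0\}$ on the $\binom{|\Omega|-1}{k}$ many that avoid it, so the two conditional entropies are bounded by the logs of these counts. Using the identity $\log\binom{|\Omega|-1}{k-1} = H(\mathcal{U}_T) - I_\Omega$ together with the crude bound $\log\binom{|\Omega|-1}{k}\le H(\mathcal{U}_T)$, and writing $\rho(\omega_0) = \Pr(\omega_0\in T\mid\omega=\omega_0)$, the convex combination collapses to $H(T\mid\omega=\omega_0)\le 1 + H(\mathcal{U}_T) - \rho(\omega_0)\,I_\Omega$.

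Finally I would average over the marginal law of $\omega$. The one line that does the real work is the identity $\E_\omega[\rho(\omega)] = \sum_{\omega_0}\Pr(\omega=\omega_0,\,\omega_0\in T) = \Pr(\omega\in T) = q$, which turns the $\rho$-weighted term into exactly $q\,I_\Omega$ and yields $H(T\mid\omega)\le 1 + H(\mathcal{U}_T) - q\,I_\Omega$. Combining with $H(T\mid F)\le H(T\mid\omega)$ and rearranging gives the required inequality. I expect the main obstacle to be the per-query bound of the third step: setting up the indicator split so the two combinatorial cases assemble into a clean multiple of $I_\Omega$, and checking the counting identity that makes the $\binom{|\Omega|-1}{k-1}$ term contribute exactly $-I_\Omega$ while the $\{B=0\}$ term stays harmless. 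The data-processing reduction and the final averaging should then be routine, and the role of the ``$+1$'' as the single bit of hit-or-miss randomness falls out of the $H(B)\le 1$ bound.
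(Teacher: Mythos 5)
Your proof is correct and takes essentially the same route as the paper's: a reversed Fano-style argument built from the chain-rule split on the hit indicator (your $B$, the paper's $Z = \1(\X \in T)$), the data-processing inequality $H(T\mid F) \leq H(T\mid \omega)$, the counting bounds $\log\binom{|\Omega|-1}{k-1} = H(\mathcal{U}_T) - I_{\Omega}$ and $\log\binom{|\Omega|-1}{k} \leq H(\mathcal{U}_T)$, and the one-bit cap on the indicator's entropy. The only differences are organizational: you condition pointwise on $\omega = \omega_0$ and average at the end (the paper manipulates the global conditional entropies $H(T\mid Z,\X)$ and $H(Z\mid \X)$ directly), and you establish the entropic form first and convert, where the paper derives the mutual-information form and converts afterward.
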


\begin{proof}
This proof loosely follows that of Fano's Inequality~\cite{fano1961transmission}, being a reversed generalization of it, so in keeping with the traditional notation we let $\X := \omega$ for the remainder of this proof. Let $Z = \1(\X \in T)$. Using the chain rule for entropy to expand $H(Z, T | \X)$ in two different ways, we get
\begin{align}
    H(Z, T | \X)  &= H(Z|T, \X) + H(T|\X) \\
                           &= H(T|Z, \X) + H(Z|\X).
\end{align}
By definition, $H(Z|T, \X) = 0$, and by the data processing inequality $H(T|F) \leq H(T|\X)$. Thus,
\begin{align}
    H(T | F)  &\leq H(T|Z, \X) + H(Z|\X).
\end{align}
Define $P_g = \Pr(\X \in T ; \mathcal{A}) = \Pr(Z=1)$. Then,
\begin{align}
H(T|Z, \X) &= (1-P_g)H(T|Z=0, \X) + P_gH(T|Z=1, \X) \\
                    &\leq (1-P_g)\logg\binom{|\Omega|}{k} + P_g\logg\binom{|\Omega|-1}{k-1} \\
                    &= \logg\binom{|\Omega|}{k} - P_g\logg\frac{|\Omega|}{k}.
\end{align}
We let $H(\mathcal{U}_T) = \logg\binom{|\Omega|}{k}$, being the entropy of the uniform distribution over $k$-sparse target sets in $\Omega$. Therefore,
\begin{align}
    H(T|F) &\leq H(\mathcal{U}_T) - P_g\logg\frac{|\Omega|}{k} + H(Z|\X).
\end{align}
Using the definitions of conditional entropy and $I_{\Omega}$, we get
\begin{align}
    H(T) - I(T; F) &\leq H(\mathcal{U}_T) - P_gI_{\Omega} + H(Z|\X),
\end{align}
which implies
\begin{align}
     P_gI_{\Omega} &\leq I(T; F) + H(\mathcal{U}_T) - H(T) + H(Z|\X) \\
        &= I(T; F) + D(P_T \| \mathcal{U}_T) + H(Z|\X).
\end{align}
Examining $H(Z|\X)$, we see it captures how much entropy of $Z$ is due to the randomness of $T$. To see this, imagine $\Omega$ is a roulette wheel and we place our bet on $\X$. Target elements are ``chosen'' as balls land on random slots, according to the distribution on $T$. When a ball lands on $\X$ as often as not (roughly half the time), this quantity is maximized. Thus, this entropy captures the contribution of dumb luck, being averaged over all $\X$. (When balls move towards always landing on $\X$, something other than luck is at work.) We upperbound this by its maximum value of 1 and obtain
\begin{align}
     \Pr(\X \in T ; \mathcal{A}) &\leq \frac{I(T; F) + D(P_T \| \mathcal{U}_T) + 1}{I_{\Omega}},
\end{align}
and substitute $q$ for $\Pr(\X \in T ; \mathcal{A})$ to obtain the first result, noting that $q = \E_{T, F}\left[\;\overline{P}(\X \in T | F) \right]$ specifies a proper probability distribution by the linearity and boundedness of the expectation. To obtain the second form, use the definitions $I(T; F) = H(T) - H(T|F)$ and $D(P_T \| \mathcal{U}_T) = H(\mathcal{U}_T) - H(T)$.
\end{proof}

\end{document}